%% file: main.tex
\documentclass[11pt,letterpaper]{article}
\usepackage{hyperref}
\hypersetup{
     colorlinks   = true,
     linkcolor = black,
     urlcolor    = teal,
	 citecolor = teal
}
\usepackage{authblk}
\usepackage{natbib}
\usepackage[margin=1in]{geometry}
\usepackage[utf8]{inputenc} 
\usepackage[T1]{fontenc}    
\usepackage{hyperref}       
\usepackage{url}            
\usepackage{booktabs}       
\usepackage{amsfonts}       
\usepackage{nicefrac}       
\usepackage{microtype}      
\usepackage{xcolor}

\usepackage{amsmath,amsfonts,amssymb}
\usepackage{amsthm}
\usepackage{bm}
\usepackage{enumitem,array,booktabs}
\usepackage{algorithm}
\usepackage{algorithmic}
\newcounter{procedure}

\usepackage[capitalize]{cleveref}

\usepackage{bbm}
\usepackage{tabulary,multirow}
\usepackage{makecell}
\usepackage{dsfont}
\usepackage{tcolorbox}

\usepackage{thmtools}
\usepackage{xpatch}
\xpatchcmd{\proof}{\itshape}{\scshape}{}{}
\usepackage{tablefootnote}

\newtheorem{definition}{Definition}
\newtheorem{lemma}{Lemma}

\newtheorem{proposition}{Proposition}

\newtheorem{remark}{Remark}

\input{macros}

\begin{document}

\title{Online Set Learning from Precision and Recall Feedback}
\date{\today}
\author[1]{Lee Cohen}
\author[2]{Yishay Mansour}
\author[3]{Shay Moran}
\author[4]{Han Shao}
\affil[1]{Stanford University}
\affil[2]{Tel Aviv University and Google Research}
\affil[3]{Technion and Google Research}
\affil[4]{University of Maryland}

\maketitle

\begin{abstract}

We consider the problem of learning an unknown subset $\Nt$ of a domain in an online setting. 
In each round $t$, the learner predicts a set of items ${N}_t$ and receives one of two types of feedback, each with equal probability: {\it precision feedback}, in which a randomly chosen item from the predicted set $N_t$ is revealed and the learner is told whether it belongs to $\Nt$ (incurring a reward if it does), or {\it recall feedback}, in which a randomly chosen item from the target set $\Nt$ is revealed and the learner is told whether it belongs to $N_t$ (incurring a reward if it does). The goal is to maximize the cumulative reward over time.
This simple online set learning problem abstracts a variety of learning scenarios with precision- and recall--type feedback.

We show that a hypothesis class (a family of subsets of the domain) is learnable in this setting if and only if it has finite Vapnik-Chervonenkis (VC) dimension, mirroring the classical PAC characterization. However, the resulting algorithmic structure is markedly more intricate: in contrast to standard Probably Approximately Correct (PAC) learning---where the algorithmic landscape is governed by the simple principle of Empirical Risk Minimization (ERM)---our partial feedback model can invalidate ERM and even all proper learning rules. We develop algorithms to address the dependencies induced by the feedback, obtaining regret guarantees in both the realizable and agnostic settings. Our results provide a qualitative characterization of learnability in this model, addressing its most basic question, while pointing to a range of natural and intriguing open questions, including the determination of optimal regret rates.

  \end{abstract}
  
  \section{Introduction}

  In this work, we study the problem of learning an unknown subset $\Nt$ of a domain $\cX$. A basic question in modeling such problems is how to evaluate the quality of a predicted set~$\hat{N} \subseteq \cX$. Requiring exact equality between $\hat{N}$ and $\Nt$, as in the classical $0/1$ loss, is often too restrictive: a prediction that captures most elements of $\Nt$ while including only a few irrelevant items should still be considered accurate.
  
  To capture this notion of approximate correctness, we adopt the standard measures of \textit{precision} and \textit{recall}. Precision is the fraction of predicted elements that belong to $\Nt$, while recall is the fraction of $\Nt$ that appears in the prediction. These measures are standard in problems where predictions are sets and quality depends on both relevance and coverage, including information retrieval~\citep{Manning08book}, data mining~\citep{han2011dataBook}, and other subset-prediction tasks. We remark that in the statistics literature, recall is also called \textit{sensitivity} or \textit{True Positive Rate (TPR)}, precision is also called \textit{Positive Predictive Value (PPV)}.

We study this problem in a \textit{sequential online learning setting}. At each round $t = 1, \dots, T$, the learner outputs a subset $N_t \subseteq\cX$. Then, one of two feedback types is selected uniformly at random: with probability $1/2$ the learner receives \textit{precision feedback}, and with probability $1/2$ it receives \textit{recall feedback}. In the precision case, a uniformly random element is drawn from the predicted set $N_t$, and the learner is given this element along with an indication of whether it belongs to $\Nt$ (receiving a reward of $1$ if it does). In the recall case, a uniformly random element is drawn from the target set $\Nt$, and the learner is given this element along with an indication of whether it belongs to $N_t$ (receiving a reward of $1$ if it does).

  Thus, each round provides information about a single element, sampled either from the prediction or from the target set. Precision feedback reflects the relevance of predicted items, while recall feedback reflects coverage of the target set. The learner must use this limited, sequential feedback to refine its predictions over time.
  
  The goal is to minimize \textit{regret}: we assume a hypothesis class $\cH \subseteq 2^{\cX}$ of candidate subsets is given, and define regret as the gap between the learner’s cumulative reward and that of the best fixed $N \in \cH$ in hindsight. We consider two standard settings. In the \textit{realizable} setting, there exists $N \in \cH$ that achieves perfect precision and recall with respect to $\Nt$. In the \textit{agnostic} setting, no such assumption is made, and the learner competes with the best $N \in \cH$.

The model can be viewed as an abstraction of repeated decision problems in which a system proposes a set of items and receives only item-level feedback.

For example, in content recommendation, a platform presents a list of items, and feedback may arise either from an item the user selects from the list (precision-type feedback) or from an item the user seeks out independently (recall-type feedback).

A related scenario arises in hiring. The goal of an employer or HR team is to identify strong candidates for a role, and they may proactively reach out to a set of candidates they believe are promising. Precision feedback arises when a candidate from this list is evaluated—for instance, a successful hire from the list provides positive feedback, while an unsuccessful hire provides negative feedback. At the same time, candidates may apply independently, outside the proposed list; when such a candidate turns out to be a successful hire, this provides recall-type feedback, indicating that a relevant element was missed.

Similar patterns appear in other domains where decisions are made over sets of candidates. For example, in cybersecurity, a detection system might output a set of IP addresses that seem suspicious. A security analyst may inspect a flagged IP address and determine whether it is malicious, yielding precision feedback. Alternatively, an attack may be observed from a malicious IP address that was not necessarily flagged, yielding recall feedback depending on whether that IP was included in the predicted set.

  These examples are intended mainly to build intuition. Our goal is not to faithfully model any specific application, but rather to isolate a basic theoretical learning problem: learning sets from item-level precision and recall feedback.

  \smallskip
  
  \textbf{Our Contributions.}
  Our main result is a complete characterization of learnability: {\it a hypothesis class $\cH$ is learnable in this setting if and only if it has finite VC dimension.}   
  This parallels the classical PAC characterization, where VC dimension governs learnability. At the same time, the learning problem here is structurally different: the feedback depends on the learner’s prediction and is obtained by sampling either from the predicted set or from the target set. This induces two complementary but asymmetric signals - precision (relevance) and recall (coverage) - which leads to algorithmic behavior that differs substantially from both supervised learning and classical bandit models.
  
  Concretely, we show:
  \begin{itemize}
  % [itemsep=2pt, parsep=2pt, topsep=0pt, partopsep=0pt,left=5pt]
      \item In the realizable setting, we propose an algorithm that selects the smallest hypothesis consistent with the observed recall feedback, and prove that it achieves regret $O(d \log^2 T)$, where $d$ is the VC dimension of $\cH$. We also show that any algorithm must incur regret at least $\Omega(d)$.
  
      \item In the agnostic setting, we provide an algorithm achieving regret $\tilde{O}(d^{1/4} T^{3/4})$, showing that sublinear regret is achievable even without realizability assumptions.
  
      \item We identify several phenomena that distinguish this setting from classical learning models. In the realizable case, selecting an arbitrary hypothesis in $\cH$ that is consistent with all feedback observed so far (as in empirical risk minimization) does not, in general, yield a sound learner. However, a regularized variant of ERM that prioritizes the \textit{smallest} consistent hypothesis in the class yields a vanishing mistake bound. In contrast, in the agnostic setting, proper learning may fail: achieving low regret sometimes requires using hypotheses outside the class $\cH$.
  
      We also establish a connection to online learning: any online classification algorithm with a finite mistake bound can be transformed into an algorithm for our setting with a comparable mistake bound. This connection is one-way: there exist hypothesis classes that are not online learnable, yet are learnable in our setting. This separation arises because learnability in our model is characterized by VC dimension, whereas online learnability is governed by the Littlestone dimension, which can be arbitrarily larger.
  
      Finally, in the realizable case, we show that recall feedback alone suffices for learning, and that precision feedback is not necessary. This phenomenon appears to be specific to the realizable setting; in particular, our algorithm for the agnostic case relies on both types of feedback.
  \end{itemize}
  
  Overall, even this basic set-learning model exhibits a range of non-trivial learning behaviors. Our results provide a complete characterization of learnability in this setting via VC dimension. A natural next step is to refine this understanding by determining optimal regret and mistake bounds for specific hypothesis classes. In particular, it would be interesting to understand whether the agnostic regret can be improved to the more familiar $\sqrt{dT}$ rate,

  and whether the $\log^2 T$ factor in the realizable case can be removed. We believe these questions present a number of interesting and challenging problems.

  \paragraph{Related Work.}
  Precision and recall are natural and standard metrics used broadly in machine learning, spanning applications from binary classification~\cite{juba2019precision}, multi-class classification~\cite{grandini2020metrics},
  regression~\cite{torgo2009precision}, and time series~\cite{DBLP:conf/nips/TatbulLZAG18} to information retrieval~\cite{arora2016evaluation} and generative models~\cite{sajjadi2018assessing}.
  Beyond precision-recall, another related metric---the area under the ROC curve (AUC)---has also been extensively studied in the history of binary classification~\cite{ DBLP:conf/nips/CortesM03, DBLP:conf/nips/CortesM04,DBLP:conf/icml/Rosset04, DBLP:journals/jmlr/AgarwalGHHR05},  with a focus on generalization. 
  
  Multi-label learning~\cite{McCallum1999Multi,Schapire2000BoosTexterAB} has been an area of study in machine learning, with various, primarily experimental approaches (see, e.g., \cite{Elisseeff01,Petterson11,NKapoor12} and~\cite{Zhang14survey,BOGATINOVSKI2022survey} for surveys). In multi-label learning, the training set consists of examples, each associated with multiple labels rather than just one. The goal is to train a model that can learn the relationships between the features of each example and all its labels. At test time, the learner predicts a list of labels for new examples, aiming to capture all the relevant labels, rather than just a single one. 
  
  Some works have studied Bayes-consistency of surrogate losses in multi-label learning, where direct loss minimization is often infeasible~\cite{pmlr-v19-gao11a}. Extensions include rank-based metrics like precision@$\kappa$ and recall@$\kappa$~\citep{Menon19}, which are loss functions defined under the constraint that the number of labels predicted by the model is limited to 
  $\kappa$. Recently,~\cite{mao2024multilabellearningstrongerconsistency} provided stronger $H$-consistency bounds with non-asymptotic guarantees. 

  Unlike multi-label learning, which trains on full per-example label sets, we consider an online learning problem and our goal is to identify a single hidden subset using only random positive samples and bandit-style precision feedback\footnote{Bandit-style means precision is only observable for the implemented hypothesis: given $N_t=\{A,B,C\}$, feedback comes from a sampled item in $N_t$
   and cannot evaluate other sets such as $\{D,E\}$. Thus, feedback is one-item and hypothesis-dependent.}, with no explicit negatives or complete label information. 

  A recent work \cite{Cohen24PR} considered precision and recall learning in the PAC model for user-dependent sets, and derived guarantees under recall‐only feedback; in contrast, our results deliver online regret bounds for precision and recall in a partial‐feedback setting.

  \vspace{-3mm}
  \section{Model}

In our model, we distinguish between a universe of items \(\cX\), on which the hypothesis class \(\cH \subseteq 2^{\cX}\) is defined, 
and a fixed available set \(X \subseteq \cX\) in a given learning instance. 
The learner \textit{knows} \(X\). 
For intuition, one may think of \(\cX\) as the set of all dishes a restaurant could potentially offer, 
while \(X\) is the current menu, determined by seasonality, ingredients, or the chef's daily choices. 
The learning task could then be to learn a personalized submenu for a customer based on the currently available dishes in \(X\). 
This distinction allows us to define learnability in a qualitative, ``distribution-free'' manner --- uniformly over all finite sets \(X \subseteq \cX\) --- even for infinite hypothesis classes over infinite domains \(\cX\). 
In particular, it clarifies the role of VC dimension as the quantity characterizing learnability. 
See \Cref{sec:Xdifference} for a discussion on \(X=\cX\) versus \(X\subseteq\cX\).

Let $\Nt\subseteq X$ be the \textit{unknown} target set. 

  We consider an online setting over $T$ rounds. At each round $t = 1, \ldots, T$, the learner selects a
  {predicted} set $N_t \subseteq X$, and one of the following two feedback types is selected with equal probability (i.e., probability $1/2$):

  \begin{itemize}
      \item \textbf{Recall feedback:} a uniformly random item $u_t \sim \Unif(\Nt)$ is sampled, 
      and the learner observes it and  receives recall reward $g_t(N_t) = \1(u_t \in N_t)$.
      \item \textbf{Precision feedback:} a uniformly random item $v_t \sim \Unif(N_t)$ is sampled, along with a label $y_t = \1(v_t \in \Nt)$, and the learner observes the item and receives precision reward $g_t(N_t) = y_t$.
  \end{itemize}

  The \textit{(expected) reward} of a set $N\subseteq X$ is defined as:
  \begin{align*}
  g(N) &= \frac{1}{2} \underbrace{\PPs{u \sim \Unif(\Nt)}{u \in N}}_{\text{recall}: r(N)} + \frac{1}{2} \underbrace{\PPs{v \sim \Unif(N)}{v \in \Nt}}_{\text{precision}: p(N)}\,.
  \end{align*}

  {The first term, recall $r(N)$, is the probability that a uniformly random target item $u \sim \mathrm{Unif}(\Nt)$ is included in $N$. The second term, precision $p(N)$, is the probability that a uniformly random item $v \sim \mathrm{Unif}(N)$ belongs to $\Nt$.}
  
  Recall can also be expressed as $r(N) = \frac{|N\cap \Nt|}{|\Nt|}$, which is the fraction of items in $\Nt$ that also belong to $N$. Similarly, precision can be expressed as $p(N) = \frac{|N\cap \Nt|}{|N|}$, which is the fraction of items in $N$ that also belong to $\Nt$.\footnote{We adopt the convention that $r(N) = 1$ when $|\Nt| = 0$ and $p(N) = 1$ when $|N| = 0$.}

  Given any universal set $N \subseteq \cX$ (which may include items {outside $X$}), 
  {the set effectively available to the learner is its restriction,} $N_{|X} = N \cap X$.
  For example, if $N = \Nt \cup N^{\text{unavailable}}$, where $\Nt$ is the target set and $N^{\text{unavailable}} \subseteq \cX \setminus X$ consists of some unavailable items, then none of the items in $N^{\text{unavailable}}$ 
  can be selected. Consequently, the set of items 
  $N$ coincides exactly with $\Nt$. In this case, $N$ achieves perfect precision and recall, and therefore receives the same reward as $\Nt$. We therefore generalize the definition of reward to any $N \subseteq \cX$: if $N$ contains items outside $X$, only the subset $N_{|X}$ can be played. Accordingly, we define the reward of any set $N$ as
  \[
  g(N) = g(N_{|X}).
  \]
  Then given a hypothesis class $\cH\subseteq 2^\cX$, the \textit{optimal} hypothesis is 
  \[
  N^\star = \argmax_{N \in \cH} g(N)\,.
  \]

  The learner’s goal is to minimize the regret:
  \[
  \text{Reg}_T = g(N^\star) \cdot T - \EE{\sum_{t=1}^T g_t(N_t)} = g(N^\star) \cdot T - \sum_{t=1}^T g(N_t)\,.
  \]
  We say a hypothesis class $\cH$ is \textit{learnable} if we can always achieve diminishing regret for any $X$ and $\Nt$. More formally, 
  \begin{definition}[Learnability] 
  A hypothesis class $\mathcal{H}$ is learnable if there exists a learning rule and a function $R(T, \delta)$ such that for any confidence parameter $\delta \in (0,1)$: (1) $R(T, \delta) = o(T)$ for any fixed $\delta$, and (2) for all available sets $X \subseteq \mathcal{X}$ and target sets $\Nt \subseteq X$, the learning rule (given $X$) achieves $\text{Reg}_T \leq R(T, \delta)$ with probability at least $1-\delta$.
  \end{definition}
  
  This raises the central question:
  \begin{center}
      \textbf{Which hypothesis classes are learnable?}
  \end{center}
  
  Note that our learning problem is not full-information. 
  While we can observe a recall reward for each hypothesis $N \in \cH$, i.e., $\1(u_t \in N)$ after receiving the recall feedback $u_t\sim \Unif(\Nt)$, the situation is different for precision. 
  In precision feedback, the item $v_t$ is sampled from the predicted list, so we can only observe a precision reward for the selected hypothesis, not for all hypotheses in $\cH$. \textit{Thus, recall feedback is full-information-style (i.e., evaluable across all hypotheses), whereas precision feedback is bandit-style (i.e., limited to the selected hypothesis).}
  Applying a standard bandit algorithm such as \textsc{EXP3} in this setting yields the regret bound:
  \[
  \text{Reg}_T = O\left(\sqrt{T |\cH| \log |\cH|}\right),
  \]
  which scales polynomially with the size of the hypothesis class $|\cH|$.

  Thus, any finite class is learnable. However, when the class is large, i.e., $|\cH| \geq T$, this regret bound becomes vacuous, and it remains unclear which infinite classes are learnable.

  In the remainder of the paper, we simplify the setting by assuming that in each round we observe both precision and recall feedback. This assumption is justified as follows: in the original setting, where only one type of feedback (precision or recall) is observed per round at random, we can simulate both types by repeatedly using the same hypothesis for $O(\log(T/\delta))$ rounds. With high probability (at least $1-\delta$), this ensures receiving both feedback types for each implemented hypothesis. As a result, any algorithm designed for the simplified setting with regret bound $R$ can be extended to the original setting with regret at most $O(\log(T/\delta)) \cdot R$. Any regret lower bound in this simplified setting is also a lower bound in the original setting.

  Each set $N \subseteq \cX$ naturally defines a binary classifier $h_N : \cX \to \{0,1\}$ via $h_N(x) = \1(x \in N)$. We define the VC dimension and Littlestone dimension of the hypothesis class $\cH$ as the VC and Littlestone dimensions of the corresponding class of binary functions $\{h_N \mid N \in \cH\}$. We now present regret bounds in terms of these dimensions.

  \section{Realizable Setting}\label{sec:realizable}
  In the realizable setting, we assume that the optimal set $N^\star$ incurs no loss, i.e., $g(N^\star) = 1$. We propose two algorithmic approaches: one based on the mistake-bound model from online learning, and the other inspired by a reduction from PAC learning to stochastic online learning, where in each round an Empirical Risk Minimizer is selected based on the historical data. The first approach yields a regret guarantee based on the Littlestone dimension, while the second yields a regret bound of $O(\vcd(\cH) \log^2 T)$ 
  based on the VC dimension. The former is quantitatively better when the hypothesis class has finite Littlestone dimension. We delegate all missing proofs to the appendix.

  \subsection{Reduction to Standard Online Learning}
  Any standard online learning algorithm $\cA$ for binary classification can be applied directly here (\Cref{alg:reduction}) and incurs the same amount of mistakes.
  \begin{algorithm}[t]\caption{Reduction to Online Classification}\label{alg:reduction}
    \begin{algorithmic}[1]
      \STATE \textbf{Input:} standard online learning algorithm $\cA$
      \FOR{$t = 1, \ldots, T$}
      \STATE Let $N_t = \{x|\cA \text{ predicts positive at } x\}$
      \IF{observe precision feedback $v_t$ and make a mistake $v_t \notin \Nt$} 
      \STATE Feed $\cA$ with $(v_t, 0)$.
          \ELSE

          \STATE  Feed $\cA$ with $(u_t, 1)$, where $u_t$ is a recall feedback.
      \ENDIF

      \ENDFOR
    \end{algorithmic}
  \end{algorithm}
  \begin{restatable}{theorem}{halving}\label{thm:halving}
  For any $X\subseteq \cX$ and $\Nt\subseteq X$, for any realizable hypothesis class $\cH$, for any online classification algorithm $\cA$ with mistake bound $M$ for $\cH$, \Cref{alg:reduction} will incur regret $\text{Reg}_T \leq M$. By choosing $\cA$ to be the standard optimal algorithm  by~\cite{littlestone1988learning}, \Cref{alg:reduction} will incur regret $\text{Reg}_T \leq \ld(\cH)$, where $\ld(\cH)$ is the Littlestone dimension of $\cH$.
  \end{restatable}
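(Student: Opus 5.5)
The plan is to charge every unit of expected per-round regret to a mistake of $\cA$ on a sequence that is realizable by $\cH$, and then invoke the mistake bound $M$. Throughout I work in the simplified setting fixed above, where each round reveals both a precision item $v_t \sim \Unif(N_t)$ and a recall item $u_t \sim \Unif(\Nt)$. Here $N_t$ means the played set $N_t \cap X$, so $v_t \in X$. Since the regret is random through the learner's choices, I read ``$\text{Reg}_T \le M$'' as a bound on the expected regret $\sum_t \EE{1-g(N_t)}$ (note $g(N^\star)=1$), and I interpret $g$ on the restriction to $X$ as in the model.

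\emph{Step 1: the fed sequence is realizable.} By realizability there is $N^\star \in \cH$ with $g(N^\star)=1$, hence $N^\star \cap X = \Nt$.
\begin{itemize}
\item If the algorithm feeds $(v_t,0)$, then $v_t \in X \setminus \Nt$, so $v_t \notin N^\star$ and the label $0$ agrees with $h_{N^\star}$.
\item If it feeds $(u_t,1)$, then $u_t \in \Nt \subseteq N^\star$, so the label $1$ agrees as well.
\end{itemize}
Thus every example passed to $\cA$ is labeled consistently with $h_{N^\star}$. The mistake bound therefore applies: on the whole run, regardless of how the examples are chosen adaptively, $\cA$ errs at most $M$ times.

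\emph{Step 2: identifying the mistakes.} I claim that $\cA$ errs in round $t$ exactly in two cases.
\begin{itemize}
\item Case $v_t \notin \Nt$: since $v_t \in N_t$, $\cA$ predicted positive at $v_t$, and the fed label is $0$.
\item Case $v_t \in \Nt$ and $u_t \notin N_t$: $\cA$ predicted negative at $u_t$, and the fed label is $1$.
\end{itemize}
Write $p=p(N_t)$ and $r=r(N_t)$. Conditional on the history, $v_t$ and $u_t$ are independent. The conditional probability of a mistake is therefore $(1-p) + p(1-r)$.

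\emph{Step 3: the key inequality.} With $a = 1-p$ and $b = 1-r$, both in $[0,1]$, the mistake probability is $a + b - ab$. The per-round regret is $1-g(N_t) = (a+b)/2$. Since $ab \le \min(a,b) \le (a+b)/2$, we get $a+b-ab \ge (a+b)/2$. So the conditional expected regret in each round is at most the conditional probability that $\cA$ makes a mistake. Summing over $t$ and taking expectations (tower rule) gives $\sum_t \EE{1-g(N_t)} \le \EE{\#\text{mistakes}} \le M$. The edge cases $\Nt=\emptyset$ or $N_t=\emptyset$ follow from the conventions $p=1$ and $r=1$.

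\emph{Step 4: the Littlestone bound.} Taking $\cA$ to be Littlestone's Standard Optimal Algorithm gives $M = \ld(\cH)$, which yields the second claim.

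The only step needing care is Step 3, namely matching the randomized regret to the mistake count through the elementary inequality $a+b-ab \ge (a+b)/2$. A secondary point is Step 1: the mistake bound must hold for adaptively generated realizable sequences, which is true for standard mistake-bound algorithms.
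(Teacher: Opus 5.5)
Your proof is correct and takes essentially the same route as the paper: the examples fed to $\mathcal{A}$ are labeled consistently with $N^\star$, and each unit of loss is charged to a mistake of $\mathcal{A}$, whose number is at most $M$ (and at most $\mathrm{Ldim}(\mathcal{H})$ for Littlestone's algorithm). The paper charges pathwise, arguing that any round with a realized precision or recall failure makes $\mathcal{A}$ err on the fed example, and your inequality $a+b-ab \ge (a+b)/2$ is the conditional-expectation form of that observation; your write-up is more careful than the paper's about the realizability of the fed sequence, about only one example being fed per round, and about reading the bound as one on expected regret.
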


  Hence, any hypothesis class with finite Littlestone dimension is learnable in our setting. However, in the following, we show that the Littlestone dimension is not the appropriate complexity measure in our setting; instead, the VC dimension is the relevant one.

  \subsection{Hypothesis Classes with Finite VC Dimension are Learnable}
   We now present a different algorithm inspired by Empirical Risk Minimization (ERM). In standard 

   online/i.i.d. realizable setting, selecting any empirical risk minimizer—i.e., an arbitrary hypothesis that is consistent with the observed history—yields a regret bound of $O(\vcd(\cH) \log T)$. In our setting, we can also achieve a regret of 
   $O(\vcd(\cH) \log^2 T)$ using a similar idea: at round $t$, let $H_t = \{N\in \cH \mid u_\tau \in N, \forall \tau \in [t-1]\}$ denote the set of hypotheses consistent with all historical \textit{recall} feedback. Among them, we choose a hypothesis with a minimal size (intersecting with the available items $X$),
  \[
  N_t \in \argmin_{N  \in H_t} |N\cap X|\,,
  \]
  where $|N\cap X|$ is the number of available items predicted by $N$. This corresponds to selecting the hypothesis with maximum likelihood.
  
  \begin{restatable}{theorem}{relmle}\label{thm:mle}
      For any $X\subseteq \cX$ and $\Nt\subseteq X$, for any realizable hypothesis class $\cH$, choosing the maximum likelihood hypothesis in each round achieves regret of $(d\log(T) + \log(T/\delta)) \log T$ with probability at least $1-\delta$, where $d=\vcd(\cH)$.
  \end{restatable}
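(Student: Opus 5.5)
The plan is to reduce the per-round regret to a pure recall error and then control that error with a standard realizable uniform-convergence bound. Recall feedback is drawn i.i.d.\ from a fixed distribution that does not depend on the learner's choices, so the bound applies directly.

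\textbf{Step 1: set-up and feasibility.} Throughout I work with restrictions to $X$: replace every $N\in\cH$ by $N\cap X$. This does not increase the VC dimension, and it does not change rewards since $g(N)=g(N_{|X})$. Realizability gives $N^\star\in\cH$ with $g(N^\star)=1$, i.e.\ $N^\star\cap X=\Nt$.

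If $\Nt=\emptyset$, then $N^\star\cap X=\emptyset$. Hence $\min_{N\in H_t}|N\cap X|=0$ and $N_t\cap X=\emptyset$, which gives reward $1$ by the conventions, so there is nothing to prove. Otherwise every recall sample satisfies $u_\tau\in\Nt\subseteq N^\star$. Thus $N^\star\in H_t$ for all $t$, the argmin is well defined, and
\[
|N_t\cap X|\le|N^\star\cap X|=|\Nt|.
\]

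\textbf{Step 2: precision is dominated by recall.} Write $k=|N_t\cap\Nt|$. Then $r(N_t)=k/|\Nt|$ and $p(N_t)=k/|N_t\cap X|\ge k/|\Nt|=r(N_t)$, using the minimality from Step 1. Hence the per-round regret satisfies
\[
1-g(N_t)=\tfrac12\bigl(1-r(N_t)\bigr)+\tfrac12\bigl(1-p(N_t)\bigr)\le 1-r(N_t).
\]
This is exactly where choosing the \emph{smallest} consistent hypothesis is used: an arbitrary consistent hypothesis could be huge and have poor precision. It therefore suffices to bound $\sum_t \bigl(1-r(N_t)\bigr)$.

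\textbf{Step 3: recall error via realizable uniform convergence.} Let $D=\Unif(\Nt)$. The term $1-r(N)=\Pr_{u\sim D}[u\notin N]$ is the error of the classifier $h_N$ with respect to $D$ and the all-positive target, and $N_t$ is consistent with the $t-1$ i.i.d.\ samples $u_1,\dots,u_{t-1}\sim D$. These samples are independent of the learner's choices, since recall samples come from $\Nt$ only.

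I will apply the classical realizable VC bound. With probability at least $1-\delta/T$, every $N\in\cH$ consistent with $m=t-1$ samples has
\[
1-r(N)\le \frac{c\,(d\log m+\log(T/\delta))}{m}.
\]
I will invoke this via the standard $\epsilon$-net / double-sampling argument, either Blumer et al.\ or the sharper $O\bigl((d\log(m/d)+\log(1/\delta))/m\bigr)$ version.

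A union bound over $t\le T$ makes all these events hold simultaneously with probability at least $1-\delta$. Summing, and bounding the first round by $1$,
\[
\mathrm{Reg}_T\le 1+\sum_{m=1}^{T-1}\frac{c\,(d\log T+\log(T/\delta))}{m}=O\bigl((d\log T+\log(T/\delta))\log T\bigr).
\]

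\textbf{Main obstacle.} The main obstacle is Step 2, the observation that minimality makes precision at least recall. Once that is in place, the rest is the textbook online-to-batch-style summation of $1/t$ rates. A minor care point is that the uniform convergence must be uniform over $\cH$, because $N_t$ is data-dependent. This is why I use the $\epsilon$-net statement (every consistent hypothesis is good) rather than a fixed-hypothesis bound.
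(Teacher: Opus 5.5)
Your proposal is correct and follows essentially the same route as the paper's proof. Both use minimality of $N_t$ against the always-consistent $N^\star$ to get $p(N_t)\ge r(N_t)$, so the per-round loss is at most the recall error. Both then control that recall error by the realizable VC uniform-convergence bound for ERM under $\Unif(\Nt)$, union-bound over rounds, and sum the resulting $1/t$ rates to obtain $O\bigl((d\log T+\log(T/\delta))\log T\bigr)$. Your explicit handling of the empty-target case and your remark that the recall samples are independent of the learner's choices are small tidy-ups that the paper leaves implicit.
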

  
  Note that the maximum likelihood predictor is a specific hypothesis consistent with all historical recall feedback and does not utilize the precision feedback at all. A natural question raised here is: \textit{would any arbitrary hypothesis consistent with recall feedback work?} The answer is no. Consider an example of input space $\cX=\{0,1,\ldots,n\}$ and hypothesis class $\cH=\{\{0\}, \{0,1\},\ldots,\{0,n\}\}$. When $X = \cX $ and $\Nt = \{0\}$, all hypotheses in $\cH$ will be consistent with the recall feedback as we will only observe $u_t = 0$. But all the other hypotheses, e.g., $\{0,1\}$ will incur $\frac{1}{2}$ loss in precision. This means that, in general, ERMs do not work for this problem. Next, we show a lower bound that depends on the $\vcd(\cH)$.

  \begin{restatable}[Lower bound]{theorem}{lbvc}\label{thm:lb-vc} 
  For any $d>0$ and any hypothesis class $\mathcal H$ with $\vcd(\cH)=d$, there exists an available set $X$ such that for any (possibly randomized) online learner, there exists a target set $\Nt$ realizable by $\cH$ for which the learner will suffer expected regret $\EE{\text{Reg}_T}\geq d/50$ for $T\geq d/10$.
  \end{restatable}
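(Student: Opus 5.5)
We will prove the lower bound by restricting the available set to a shattered set and reducing to a simple learning problem on the cube. Let $S=\{x_1,\dots,x_d\}\subseteq \cX$ be a set shattered by $\cH$, and take $X=S$. Since $S$ is shattered, for every $A\subseteq S$ some $N\in\cH$ has $N\cap X=A$. Hence every nonempty subset of $X$ is a realizable target, with $g(N^\star)=1$. The proof is a Yao-style averaging argument. We draw the target $\Nt$ at random from a suitable distribution over subsets of $X$ and show that the expected regret of any deterministic learner against this random target is at least $d/50$. Some fixed target then inherits this bound, and the result extends to randomized learners by Fubini.

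For the distribution we take $\Nt=\{x_i:\sigma_i=1\}$, where the $\sigma_i$ are i.i.d.\ Bernoulli$(1/2)$. The rare empty outcome can be conditioned away or handled by the convention $r=p=1$; a simpler alternative is to always include $x_1$. Under this prior the learner knows $\Nt$ on coordinate $x_i$ only after observing $x_i$ in feedback. Recall feedback reveals only elements of $\Nt$. Precision feedback reveals one element of $N_t$ together with its label. Each round therefore reveals at most two coordinates, so within the first $T_0=d/10$ rounds at most $2T_0=d/5$ coordinates are revealed. At every round $t\le T_0$, at least $4d/5$ coordinates remain unrevealed, and by independence each of them is a fair coin given the history.

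The key step is a per-round loss bound. Fix a round $t\le T_0$ and let $U$ be the set of unrevealed coordinates, so $|U|\ge 4d/5$. For each $i\in U$ the learner either includes $x_i$ in $N_t$ or omits it. If it includes $x_i$, then with probability $1/2$ the item is a false positive and lowers precision. If it omits $x_i$, then with probability $1/2$ the item is a missed target and lowers recall. We aim to show
\[
\mathbb{E}\bigl[1-g(N_t)\bigr] \;\ge\; c \quad\text{for an absolute constant } c\ge 1/5.
\]
Roughly, let $a$ be the number of unrevealed coordinates included. In expectation $a/2$ of them are wrong, out of a predicted set of size at most $d$, giving precision loss about $a/(2d)$. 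The remaining $|U|-a$ unrevealed coordinates contribute about $(|U|-a)/2$ missed targets, out of $|\Nt|\approx d/2$, giving recall loss about $(|U|-a)/d$. Weighting each loss by $1/2$, the total is $\tfrac{1}{2}\cdot\tfrac{a}{2d}+\tfrac{1}{2}\cdot\tfrac{|U|-a}{d}\ge \tfrac{|U|}{4d}\ge \tfrac15$.

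The main obstacle is making this estimate rigorous. Precision and recall are ratios with random denominators, namely $|N_t|$ and $|\Nt|$. To handle this we will bound the denominators crudely rather than compute expectations of ratios: $|N_t|\le d$ holds always, and $|\Nt|\le d$ as well. This already gives
\[
1-p(N_t)\ge \frac{\#\{\text{false positives}\}}{d}, \qquad 1-r(N_t)\ge \frac{\#\{\text{misses}\}}{d}.
\]
Taking expectations conditional on the history yields a per-round loss of at least $\tfrac12\cdot\tfrac{|U|}{2d}\ge \tfrac15$, which makes the constant-chasing robust. Summing over the first $\min(T,T_0)\ge d/10$ rounds gives expected regret at least $\tfrac{d}{10}\cdot\tfrac15=\tfrac{d}{50}$. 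Rounds after $T_0$ have nonnegative regret because $g(N^\star)=1$, so they only add to the total. Finally, an averaging argument over the prior produces a single target $\Nt$ achieving this bound.
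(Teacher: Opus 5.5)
There is a genuine gap at the step ``by independence each of them is a fair coin given the history.'' Under your i.i.d.\ Bernoulli$(1/2)$ prior, a recall sample does more than reveal one coordinate. Its likelihood is $\Pr(u_s=x\mid \Nt=A)=\1(x\in A)/|A|$, and the factor $1/|A|$ depends on the unrevealed coordinates. After $t-1$ rounds the posterior is proportional to $|\Nt|^{-(t-1)}$ times the indicator of consistency with the revealed labels. It is therefore neither a product measure nor uniform on the unrevealed coordinates: it is tilted toward small targets.

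Already for $d=2$ this bites. After observing $u_1=x_1$, the posterior puts mass $2/3$ on $\{x_1\}$ and $1/3$ on $\{x_1,x_2\}$, so $x_2\in\Nt$ with probability $1/3$. In general, for $t\ge 2$ every unrevealed coordinate lies in $\Nt$ with posterior probability strictly below $1/2$. A learner that omits all of them then makes fewer than $|U|/2$ expected errors on $U$. Your per-round bound $|U|/(4d)\ge 1/5$ is tight at $|U|=4d/5$, so the claimed total $d/50$ does not follow. Rescuing the i.i.d.\ prior would need separate quantitative control of this tilt over the first $d/10$ rounds.

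The paper sidesteps this by drawing $\Nt$ uniformly among the subsets of $X$ of fixed size $d/2$. Then $|\Nt|^{-(t-1)}$ is constant on the support, and the posterior is uniform over size-$d/2$ sets consistent with the revealed labels. Your crude-denominator bookkeeping then works if you replace ``fair coins'' by counting:
\begin{itemize}
\item Suppose $k$ positives and $m$ negatives have been revealed, with $k+m\le 2(t-1)$. The unrevealed part then contains exactly $d/2-k$ positives and $d/2-m$ negatives.
\item Using $|N_t|\le d$ and $|\Nt|=d/2$, the conditional expected loss of round $t$ is a convex combination of $\frac{d/2-m}{2d}$ and $\frac{d/2-k}{d}$. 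The weight on the first term is the fraction of $U$ included in $N_t$.
\item Hence the round-$t$ loss is at least $\frac14-\frac{t-1}{d}$.
\item Summing over $t\le \lceil d/10\rceil\le T$ gives more than $\lceil d/10\rceil/5\ge d/50$.
\end{itemize}
The paper instead splits on whether $|N_t|\ge 2d/3$ and bounds the precision loss or the recall loss in each case. With the fixed-size prior, your per-coordinate count is a clean alternative.
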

  There is a gap of $\log^2 (T)$ between the upper and lower bounds. However, we show that it is impossible to improve the lower bound to $d\log^2 (T)$. The optimal regret bound is left as an open question.

  \begin{proposition}\label{prop:lbvc}
  For any $d>0$, there exists a hypothesis class $\mathcal{H}$ with $\text{VCdim}(\mathcal{H}) = \log|\mathcal{H}|=d$ such that for any $X\subseteq \cX$, we can achieve regret $\text{Reg}_T\leq d$.  
  \end{proposition}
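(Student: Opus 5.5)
Construct a class with VC dimension equal to $\log|\cH|$ and Littlestone dimension at most $d$, then apply \Cref{thm:halving}. The simplest candidate is the full power set of $d$ points: let $\cX=\{x_1,\dots,x_d\}$ (or embed these $d$ points in any larger universe) and $\cH=2^{\{x_1,\dots,x_d\}}$. Then $|\cH|=2^d$, so $\log|\cH|=d$. Since $\{x_1,\dots,x_d\}$ is shattered, $\vcd(\cH)\ge d$. Since VC dimension never exceeds $\log|\cH|$ for a finite class, $\vcd(\cH)=d$.

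Next, bound the Littlestone dimension. A mistake tree of depth $k$ shatters at least $2^k$ distinct hypotheses, so $\ld(\cH)\le\log|\cH|=d$. Alternatively, run the halving algorithm on $\cH$; it makes at most $\log|\cH|=d$ mistakes in the realizable setting. Either way, an online classification algorithm with mistake bound $M\le d$ exists.

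Now fix any $X\subseteq\cX$. Realizability is in the sense of the section: some $N\in\cH$ has $g(N)=1$, meaning $N\cap X=\Nt$. The subtle point is that \Cref{thm:halving} feeds $\cA$ labeled examples $(v_t,0)$ and $(u_t,1)$ with $v_t,u_t\in X$. These labels agree with $h_{N^\star}$ on $X$, since $N^\star\cap X=\Nt$. Hence the sequence fed to $\cA$ is realizable by $\cH$, and the mistake bound applies. Invoking \Cref{thm:halving} then gives $\text{Reg}_T\le M\le d$ for every $X$ and every realizable $\Nt$.

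I expect no real obstacle; the construction essentially forces $\ld=\vcd$. The one step to check is that restricting to $X$ preserves realizability of the feedback stream. It does, because every fed point lies in $X$, where $h_{N^\star}$ and $\1(\cdot\in\Nt)$ coincide. If the intended reading requires an infinite domain, the same class works: all points outside $\{x_1,\dots,x_d\}$ are negative for every hypothesis, and this does not change the counting argument.
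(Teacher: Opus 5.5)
Your proof is correct, but it takes a different route from the paper. The paper uses the same class, the power set of $\cX=\{x_1,\dots,x_d\}$, and gives an explicit algorithm. In rounds $t\le d$ it plays the singleton $\{x_t\}$, so the precision feedback deterministically reveals whether $x_t\in\Nt$. After $d$ rounds $\Nt$ is known exactly and is played from then on. Each exploration round costs at most $1$, so $\text{Reg}_T\le d$ on every run, with no appeal to Littlestone dimension or to \Cref{thm:halving}.

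You instead bound $\ld(\cH)\le\log|\cH|=d$ (or use halving) and invoke the reduction of \Cref{thm:halving}. Your check that the examples fed to $\cA$ remain realizable after restricting to $X$ is exactly the right one; here it is automatic, since $\cH=2^{\cX}$ contains every $\Nt\subseteq X$. Your route is more general: it gives regret at most $\log|\cH|$ for any finite class, hence at most $d$ whenever $\vcd(\cH)=\log|\cH|$. It also makes explicit that this is a class with $\ld=\vcd$.

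The price is that you inherit the form of guarantee behind \Cref{thm:halving}, whose proof only counts mistakes of $\cA$. The per-round loss $1-g(N_t)$ is at most the conditional probability that the example fed to $\cA$ is a mistake. Counting mistakes therefore bounds expected regret, not realized regret. For instance, if halving breaks ties toward $0$, then $N_t$ is the set of confirmed positives. When recall samples keep hitting already-confirmed points, the realized $\sum_t(1-g(N_t))$ can exceed $d$. The paper's singleton-probing algorithm avoids this and achieves $\text{Reg}_T\le d$ deterministically.
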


  \section{Agnostic Setting}\label{sec:agnostic}
  In the agnostic setting, the reward of the optimal hypothesis, $g(N^\star) = \max_{N \in \cH} g(N)$, does not necessarily equal 1. 
  
  \subsection{Impossibility Result for Proper Algorithms}
  A learning algorithm is said to be \textit{proper} if, in each round, it chooses a hypothesis from the hypothesis class $\cH$. Most well-known online learning algorithms—such as Multiplicative Weights, Follow the Perturbed Leader, and \textsc{EXP3}—as well as our algorithms for the realizable setting, are proper. However, in the agnostic setting, proper learning rules do not work in general.
  
  Specifically, we establish the following impossibility result.
  \begin{restatable}{theorem}{thmbanditlb}
  \label{thm:banditlb}
      There exists a hypothesis class $\cH$ with $\vcd(\cH) = 1$ such that, for any $T$, there exists an available set $X$ and target set $\Nt\subseteq X$ for which any proper learning algorithm incurs expected regret $\EE{\text{Reg}_T} \geq C \cdot T$ for some universal numerical constant $C > 0$.
  \end{restatable}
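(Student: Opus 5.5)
The plan is to build a class where the informative signal about the best hypothesis reaches a proper learner only through bandit-style precision feedback, while recall feedback is almost never informative. We then apply Yao's principle with a uniformly random hidden index. The candidate construction is as follows. Let $\cX = [n] \cup C$ with $C$ a set of $m$ extra items disjoint from $[n]$. Let $\cH = \{\{i\} : i \in [n]\}$ be the class of singletons over $[n]$. A class of singletons has $\vcd(\cH) = 1$: one point can be shattered by $\emptyset$ being absent but $\{i\}$ realizing the positive label, which gives dimension at least $1$ (if needed, I would add $\emptyset$ to the class; this changes nothing below). No two points are shattered, since no singleton contains both. Given $T$, take $X = \cX$ with $n \ge 2T$ and $m \ge T^2$. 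The target is $\Nt = \{i^\star\} \cup C$ for some $i^\star \in [n]$.

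Next I would compute the rewards. For $\{i^\star\}$, the precision is $1$ and the recall is $1/(m+1)$, so $g(\{i^\star\}) \ge 1/2$. For any $i \neq i^\star$, both precision and recall of $\{i\}$ are $0$, so $g(\{i\}) = 0$. Hence $g(N^\star) \ge 1/2$, and every round in which a proper learner plays $N_t \ne \{i^\star\}$ costs at least $1/2$ regret. I would work in the simplified model where both feedbacks arrive each round; by the remark in the Model section, a lower bound there transfers to the original model.

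The core step is an information argument. Draw $i^\star$ uniformly from $[n]$ and argue that any (possibly randomized) proper learner plays $\{i^\star\}$ in few rounds in expectation. A recall sample equals $i^\star$ with probability $1/(m+1) \le 1/T^2$ per round. By a union bound, with probability at least $1 - 1/T$ no recall sample over the $T$ rounds ever hits $[n]$. On that event, the recall items are i.i.d. uniform on $C$ and independent of $i^\star$.

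Precision feedback on $\{i\}$ only reveals whether $i = i^\star$. So, conditioned on the good event, the learner is performing a sequential search for a uniformly random item among $n$ items using single-item membership queries. Up to the first success, the learner's query sequence is a fixed randomized function of independent randomness. Therefore the probability that $\{i^\star\}$ is found within the first $T$ rounds is at most $T/n \le 1/2$. The expected number of rounds before finding it is then at least $(1 - T/n)\,T/2$-ish; more precisely, the expected number of rounds spent on wrong singletons is at least $\sum_{t\le T} \Pr[\text{not found by } t] \ge \sum_{t\le T}(1 - t/n) \ge T - T^2/(2n) \ge 3T/4$.

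Multiplying by the per-round regret $1/2$ and subtracting the $O(1)$ contribution of the bad event gives $\EE{\text{Reg}_T} \ge C\cdot T$, say with $C = 1/4$. Yao's principle then yields a fixed $i^\star$, hence a fixed $\Nt$, that is bad for the given learner.

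The main obstacle I expect is making the coupling step rigorous. One must formalize that, on the event that recall never reveals $i^\star$, the learner's trajectory up to its first correct guess does not depend on $i^\star$, so that the search bound applies. I would handle this by defining a "ghost" run in which all precision answers are $0$ and all recall samples are uniform on $C$. This run is identical to the true run until the first time $\{i^\star\}$ is played or $i^\star$ is recall-sampled. Its query sequence is independent of $i^\star$, which gives the $t/n$ bound directly.
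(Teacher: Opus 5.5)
\paragraph{Assessment.} Your proof is correct, and it takes a genuinely different and more elementary route than the paper.

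\paragraph{Comparison with the paper.} The paper uses $k=T$ pairwise disjoint, very large hypotheses whose union is $X$. Its target contains a $(\tfrac12+\epsilon)$-fraction of one block and about half of each other block. This is a stochastic bandit instance in which recall feedback is weakly informative. The proof runs the two-worlds minimax argument: a total-variation limit $n\to\infty$ replaces sampling without replacement by i.i.d.\ feedback, a KL chain rule handles the recall categorical and the Bernoulli precision answers, and Bretagnolle--Huber finishes. This gives $\Omega(\sqrt{kT})$, which is linear at $k=T$.

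You instead make the instance essentially deterministic. Singletons turn precision feedback into a pure membership query. Padding the target with $m\ge T^2$ items outside $\bigcup\cH$ means recall samples almost never land in the support of the class. Your ghost run (all precision answers $0$, recall uniform on $C$) therefore has a play sequence independent of $i^\star$. A union bound then gives $\Pr[\{i^\star\}\text{ played by round }t]\le t/n+t/(m+1)$. This avoids all KL and limiting arguments and gives explicit constants. Note that $\sum_{t\le T}t/n=T(T+1)/(2n)$, which only shifts the constants.

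The price is twofold. Your instance yields only $\Omega(\min(n,T))$ regret with $n$ hypotheses, versus the paper's $\Omega(\sqrt{kT})$ for every $k\le T$. It also relies on target mass outside $\bigcup\cH$, whereas the paper's target lies inside the union and recall is genuinely, if weakly, informative.

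\paragraph{A false aside about $\emptyset$.} Your claim that adding $\emptyset$ to $\cH$ ``changes nothing'' is wrong, and it touches the one fragile point of your construction. Since $p(\emptyset)=1$ by convention, $g(\emptyset)=\tfrac12$, while $g(N^\star)=\tfrac12+\tfrac{1}{2(m+1)}$. A proper learner that always plays $\emptyset$ would therefore have regret at most $T/(2(m+1))\le 1/(2T)$. Fortunately you do not need $\emptyset$: for $n\ge 2$, the hypotheses $\{i\}$ and $\{j\}$ with $j\neq i$ already shatter $\{i\}$.

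The same problem appears if you want $\cH$ fixed before $T$, as the literal quantifier order of the statement asks. (The paper's proof also lets $\cH$ depend on $T$.) Singletons over an infinite domain restrict to $\emptyset$ on a finite $X$, which again has reward $\tfrac12$. You can repair this by adding a common anchor $x_0\in X\setminus\Nt$ to every hypothesis, i.e.\ taking $\cH=\{\{x_0,a\}\}$. This keeps VC dimension $1$. Every wrong hypothesis, including restrictions equal to $\{x_0\}$, then has reward $0$, against $g(N^\star)\approx\tfrac14$. Your ghost-run argument goes through unchanged.
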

  At a high level, in this construction, all hypotheses in $\cH$ are large and mutually disjoint. The target set selects $T$ hypotheses, and then includes a $\frac{1}{2} + \epsilon$ fraction of one of them, and a $\frac{1}{2}$ fraction from each of the remaining $T-1$ hypotheses. This effectively reduces the problem to a bandit instance with $T$ independent arms.
  
  However, readers might observe that the reward of every hypothesis in $\cH$ is quite low--roughly $\frac{1}{4}$--since the recall is approximately $\frac{1}{T}$ and the precision is around $\frac{1}{2}$. Notably, both the empty set $\emptyset$ and the full available set $X$ achieve a higher reward, namely $ g(X), g(\emptyset)\geq \frac{1}{2}$. It follows that whenever the best hypothesis in the class satisfies $g(N^\star) < \tfrac{1}{2}$, one can outperform it by simply outputting either $\emptyset$ or $X$. This observation motivates a natural improper learning rule: output the empty set $\emptyset$ or the full available set $X$ whenever it is detected that no hypothesis in $\cH$ performs well. Based on this improper algorithmic principle, we propose a new algorithm in the following.

  \subsection{The Algorithm}
  We define the optimality margin of $\cH$ as follows. 
  \begin{definition}[Optimality margin]
  Given a hypothesis class $\cH$ and available set $X$, let $N^\star \in \arg\max_{N\in\cH} g(N)$ be any maximizer of the expected reward. The \textit{optimality margin} of $\cH$ is defined as
  $$
    \alpha_\cH = 2g(N^\star) - 1,  
  $$
  which measures how much better the optimal hypothesis performs compared to the trivial hypotheses $\emptyset$ and $X$ ,which both achieve reward $1/2$ and therefore have margin $0$.
  \end{definition}
  Since $g(N^\star) = \frac{1}{2}(r(N^\star) + p(N^\star))$ and $g(N^\star) = \frac{1}{2} \alpha_\cH + \frac{1}{2}$, we have $r(N^\star) + p(N^\star) = 1 + \alpha_\cH$. Since both precision and recall are at most 1, the optimal hypothesis $N^\star$ has both $r(N^\star) \geq \alpha_\cH$ and $p(N^\star) \geq \alpha_\cH$.
  
  We construct a new algorithm, \textsc{APRIL} (\Cref{alg:subclass}), which outputs the empty set $\emptyset$ when it detects that no hypothesis in $\cH$ performs well. We show that this algorithm achieves regret $\widetilde{O}(d^{1/4} T^{3/4})$, where $d = \text{VCdim}(\cH)$, demonstrating that any hypothesis class with finite VC dimension remains learnable in the agnostic setting. Moreover, 
  when the optimality margin satisfies $\alpha_H\geq C$ for some contant $C>0$,
  the algorithm enjoys an improved guarantee of $O\left(\sqrt{T d \log \tfrac{T}{d}}\,\right)$. Importantly, the algorithm does not require prior knowledge of $\alpha_\cH$.

  As mentioned earlier, one of the main challenges in the algorithm design is that precision feedback is not full-information—i.e., we cannot observe precision feedback for every hypothesis in $\cH$, but only for the selected set. To address this issue, we propose an alternative approach for estimating precision by leveraging the property that $|\Nt| = \frac{p(N) \cdot |N\cap X|}{r(N)}$ for any $N$. We first estimate the size of the target set $|\Nt|$ using all historical precision and recall feedback, and then use this estimate to compute the precision of candidate sets. 
  
  Specifically, given a sequence of predicted set $N_1,\ldots,N_m \subseteq X$, an estimator for recall $\hat r(\cdot)$, and corresponding precision feedback $(y_t)_{t=1}^m$, where $y_t = \1(v_t \in \Nt)$ with $v_t \sim \Unif(N_t \cap X)$, we define the following estimator for $\nt$:
  \begin{align}
      \hat n^\star &=
      \text{EST}(\hat r(\cdot), (N_t, y_t)_{t=1})=
      \frac{1}{m} \sum_{t=1}^{m} \frac{\1(y_t = 1) \cdot |N_t\cap X|}{\hat r(N_t)} \,.\label{eq:esttarget}
  \end{align}
  However, this estimate is only reliable when the ratio $\frac{|N_t \cap X|}{|\Nt|}$, for all $N_t$ from which we have collected feedback, is both lower and upper bounded—i.e., when the sets have relatively high recall and precision.

  Based on this estimation method, our algorithm proceeds in three stages (\Cref{alg:subclass}): first, filter out all hypotheses with low recall; second, filter out all hypotheses with low precision; and third, refine the estimates for both recall and precision, and select the hypothesis with the highest estimated reward. 

  \begin{itemize}[parsep=0pt, topsep=0pt, partopsep=0pt,left=5pt]
      \item In the first stage, we choose an arbitrary set and use it for $T_1$ rounds to collect sufficient recall feedback. From these observations, we construct a recall estimator $\hat r_0$ that approximates $r(N)$ for all $N \in \cH$—i.e., $|\hat r_0(N) - r(N)|$ is small for all $N \in \cH$. We then filter $\cH_{|X}$ by retaining only those sets with high estimated recall, denoted $H_1$. From this filtered class $H_1$, we select one set $N_1 \in \argmax_{N \in H_1} \frac{\hat r_0(N)}{|N\cap X|}$. Intuitively, $N_1$ is expected to have good precision among the sets in $H_1$, since precision satisfies $p(N) = \frac{r(N) \cdot \nt}{|N\cap X|}$.
  
      \item In the second stage, we use $N_1$ for $T_2$ rounds and use the resulting feedback to compute an estimate $\hat n_0^\star$ of the target set size via \Cref{eq:esttarget}. We then refine the recall-filtered class $H_1$ by retaining only hypotheses with high estimated precision—evaluated using $\hat n_0^\star$—yielding a new candidate class $\hat\cH$.
  
      \item In the third stage, if $\hat\cH$ is empty, we predict $\emptyset$ for all remaining rounds. Otherwise, in each subsequent round, we select and predict the hypothesis in $\hat\cH$ with the highest estimated reward, while continuously updating our estimators for recall, precision, and the target set size.
  \end{itemize}

  \begin{algorithm}[t]\caption{$\textsc{APRIL (\textbf{A}daptive \textbf{P}recision‑\textbf{R}ecall }$ $\textsc{\textbf{I}nteractive \textbf{L}earner)}$}\label{alg:subclass}
    \begin{algorithmic}[1]
        \STATE \textbf{Input: }  margin parameter $\alpha$, confidence $\delta$.

        \STATE \textbf{Stage 1:} Pick an arbitrary set and select this set for $T_1=\lceil\frac{64}{\alpha^{2}}\big(d\log\frac{64d}{\alpha^{2}}
      +\log\!\frac{4}{\delta}\big)\rceil$ rounds.
        \STATE Let 
          \[\hat r_0(N) = \frac{1}{T_1}\sum_{t =1}^{T_1}\1(u_t\in  N)\,.\]
        \STATE Filter $H_1 = \{N\in \cH| \hat r_0(N)\geq \frac{7\alpha}{8}\}$ and pick $N_1 \in \argmax_{N\in H_1} \frac{\hat r_0(N)}{|N\cap X|}$.
        \STATE \textbf{Stage 2: }Select $N_1$ for $T_2 = \lceil\frac{64\log(1/\delta)}{\alpha^2}\rceil$ rounds and use these rounds to estimate $\nt$ by $$\hat n_0^\star = \text{EST}(\hat r_0, (N_1,y_1,N_1,y_2,\ldots,N_1,y_{T_2}))\,.$$
        \STATE Filter $\hat \cH = \{N\in H_1| \hat p_0(N):= \hat r_0(N) \cdot \frac{\hat n_0^\star}{|N\cap X|}\geq \frac{7\alpha}{16}\}$.
      \STATE \textbf{Stage 3: }
      \IF{$\hat{\cH} =\emptyset$}
        \STATE Select the empty set $\emptyset$ for the remaining rounds.
        \ELSE
       \FOR{$t=1,\ldots, T_3=T-T_1-T_2$}
          \STATE Update 
          $\hat r_t(N) = \frac{1}{t-1}\sum_{\tau =1}^{t-1}\1(u_\tau\in N)$, 
          \STATE $\hat n^\star_t = \text{EST}(\hat r_t, (N_\tau,y_\tau)_{1:t-1})$, and $\hat p_t(N) = \frac{\hat r_t(N) \cdot \hat n^\star_t}{|N\cap X|}\,.$
          \STATE Pick $N_t = \argmax_{N\in \hat \cH} \hat r_t(N) +\hat p_t(N)$.
          \ENDFOR
      \ENDIF
    \end{algorithmic}
  \end{algorithm}

  \begin{restatable}{theorem}{agnRegVC}\label{thm:agnRegVC}
      For any hypothesis class $\cH$ with $\vcd(\cH) = d$, any margin parameter $\alpha$, and any confidence parameter $\delta$, by running \Cref{alg:subclass} with input parameters $\alpha$ and $\delta$, with probability at least $1-\delta$, the regret satisfies
  \begin{align*}
  \text{Reg}_T &= O\bigg(\frac{d\log(d/\alpha^2)+\log(1/\delta)}{\alpha^2}  + \min\bigg(\frac{1}{\alpha}\sqrt{\big(d\log\tfrac{T}{d}+\log\tfrac{T}{\delta}\big)\,T}\,,\,\alpha\,T\bigg)\bigg).
   \end{align*}
  By setting
  $
  \alpha = \alpha^\star :=  ({d\log\!\tfrac{2eT}{d} + \log\tfrac{T}{\delta}})^{1/4}{T}^{-1/4},
  $
  we achieve
  \begin{align*}
  \text{Reg}_T &= O\bigg(\sqrt{\big(d\log\tfrac{T}{d}+\log\tfrac{T}{\delta}\big)\,T} + \big(d\log\tfrac{T}{d}+\log\tfrac{T}{\delta}\big)^{1/4}\,T^{3/4}\bigg).
  \end{align*}
  \end{restatable}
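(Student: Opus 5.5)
We prove the bound by conditioning on a single good event $\mathcal{E}$, holding with probability at least $1-\delta$, on which all estimators are uniformly accurate. The proof then splits into two cases: the margin exceeds the input parameter (roughly $\alpha_\cH \ge \alpha$), or it does not. The first term of the bound, $O\big((d\log(d/\alpha^2)+\log(1/\delta))/\alpha^2\big)$, simply charges regret at most $1$ per round for Stages 1 and 2, since $T_1+T_2$ is of this order. Everything else concerns Stage 3.

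\textbf{Concentration (the event $\mathcal{E}$).} Recall rewards are full-information, and $\{u\mapsto \1(u\in N)\}_{N\in\cH}$ has VC dimension $d$. The uniform convergence (relative/Bernstein-type VC bound) therefore gives $\sup_{N\in\cH}|\hat r_0(N)-r(N)|\le \alpha/8$ after $T_1$ samples. A union over $t\le T$ gives $\sup_N|\hat r_t(N)-r(N)|\lesssim\sqrt{(d\log(T/d)+\log(T/\delta))/t}$. For the size estimator, write $\hat n^\star=\frac1m\sum_t y_t|N_t\cap X|/\hat r(N_t)$. Its idealized version with $r$ in place of $\hat r$ is a martingale average with conditional mean $p(N_t)|N_t\cap X|/r(N_t)=\nt$. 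Whenever $r(N_t)\gtrsim\alpha$ and $p(N_t)\gtrsim\alpha$, each summand is bounded by $\nt/(r(N_t)p(N_t))\cdot p(N_t)\lesssim \nt/\alpha$. Azuma then gives relative error $O(\alpha^{-1}\sqrt{\log(T/\delta)/m})$. Replacing $r$ by $\hat r$ costs another relative factor of $O(\alpha^{-1}\sup|\hat r-r|)$. Consequently
\[
|\hat p_t(N)-p(N)|\lesssim \alpha^{-1}\epsilon_t,\qquad \epsilon_t:=\sqrt{(d\log\tfrac Td+\log\tfrac T\delta)/t},
\]
uniformly over $N\in\hat\cH$, because $|N\cap X|\le \nt\cdot r(N)/p(N)$ and $r,p\gtrsim\alpha$ on $\hat\cH$.

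\textbf{Filtering guarantees.} On $\mathcal{E}$, every $N\in H_1$ has $r(N)\ge 3\alpha/4$. Next, $N_1$ maximizes $\hat r_0(N)/|N\cap X|\approx p(N)/\nt$ over $H_1$. If $\alpha_\cH\ge\alpha$, then $N^\star\in H_1$, so $p(N_1)\gtrsim \alpha$ up to constants. This makes the Stage 2 estimate $\hat n^\star_0$ accurate within a constant factor. It follows that $N^\star\in\hat\cH$ and every $N\in\hat\cH$ has $p(N)\gtrsim\alpha$. If instead $\alpha_\cH<\alpha$ and $p(N_1)$ is small, then $\hat n_0^\star$ may be inaccurate, so we must argue separately. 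Stage 1 and 2 estimates only make $\hat\cH$ contain sets with $\hat r_0\ge 7\alpha/8$ and $\hat p_0\ge 7\alpha/16$. The output is then either $\emptyset$, which gives regret at most $g(N^\star)-\tfrac12=\alpha_\cH/2<\alpha$ per round, or a set from $\hat\cH$.

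\textbf{Stage 3 regret.} \emph{Case $\alpha_\cH\ge\alpha$.} We have $N^\star\in\hat\cH$ and all members have $r,p\gtrsim\alpha$. The greedy choice therefore satisfies $g(N^\star)-g(N_t)\le 2\sup_{N\in\hat\cH}\big(|\hat r_t-r|+|\hat p_t-p|\big)\lesssim \alpha^{-1}\epsilon_t$. Summing, $\sum_t\epsilon_t\lesssim\sqrt{(d\log\frac Td+\log\frac T\delta)T}$, which gives the $\frac1\alpha\sqrt{\cdot\,T}$ term. \emph{Case $\alpha_\cH<\alpha$.} The per-round regret of any played set is at most $g(N^\star)-g(N_t)\le \tfrac12+\tfrac{\alpha}{2}-g(N_t)$. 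The main obstacle is to show $g(N_t)\ge \tfrac12-O(\alpha)$ for played $N_t\in\hat\cH$, or else that the regret is controlled by the estimation term anyway. I would try to show that whenever $\hat\cH\neq\emptyset$, the played sets still satisfy $r,p\gtrsim\alpha$. On that class the estimates are accurate, and the algorithm competes with the best member of $\hat\cH$. For the remaining comparison, the excess of $N^\star$ over the best member of $\hat\cH$ is $O(\alpha)$, because $g(N^\star)<\tfrac12+\alpha/2$ and $\hat\cH$'s best has $g\ge\tfrac12$... This last step is delicate; a fallback is to compare against $\max(g(N^\star),\tfrac12)$ only up to $O(\alpha T)$. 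Either way the per-round regret is at most $O(\alpha)+O(\alpha^{-1}\epsilon_t)$. Taking the better of the two bounds yields the $\min(\cdot,\alpha T)$ form.

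\textbf{Tuning.} Plugging in $\alpha^\star$ balances $\alpha T$ against $\alpha^{-1}\sqrt{(\cdots)T}$, and both become $(\cdots)^{1/4}T^{3/4}$. The burn-in term becomes $\tilde O(\sqrt{(\cdots)T})$, giving the stated rate.
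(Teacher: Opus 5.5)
\paragraph{Verdict: genuine gap, in the case $\alpha_\cH<\alpha$ with $\hat\cH\neq\emptyset$.}
Your treatment of the case $\alpha_\cH\ge\alpha$ matches the paper's argument: uniform recall concentration, the $\text{EST}$ analysis, $N^\star\in\hat\cH$ with $\min(p,r)\gtrsim\alpha$ on $\hat\cH$, and greedy Stage 3 with per-round error $O(\epsilon_t/\alpha)$. Your treatment of the branch $\hat\cH=\emptyset$ also matches. The gap is the case you flag as delicate. The claim you need there is that the played sets, or the best member of $\hat\cH$, have reward $\tfrac12-O(\alpha)$. That claim is false, and no argument can close this case for \Cref{alg:subclass} as written.

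\paragraph{Counterexample.}
Take $\cH=\{N^\star,N'\}$ with $\Nt\subseteq N^\star$ and $|N^\star\cap X|=10\nt/\alpha$. Then $r(N^\star)=1$, $p(N^\star)=\alpha/10$, $g(N^\star)=\tfrac12+\tfrac{\alpha}{20}$ and $\alpha_\cH=\alpha/10$. Let $|N'|=\nt$ and $|N'\cap\Nt|=2\alpha\nt$, so $r(N')=p(N')=2\alpha$.

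On the Stage-1 event $|\hat r_0-r|\le\alpha/8$, both sets lie in $H_1$. Moreover $N_1=N'$, because $\hat r_0(N')/|N'|\ge\tfrac{15\alpha}{8\nt}>\tfrac{\alpha}{10\nt}\ge\hat r_0(N^\star)/|N^\star\cap X|$.

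Since $p(N'),r(N')\ge\alpha$, \Cref{lmm:conc-degree} gives $\hat n_0^\star\in[\nt/2,2\nt]$. Hence $\hat p_0(N^\star)\le 2\nt\cdot\tfrac{\alpha}{10\nt}=\tfrac{\alpha}{5}<\tfrac{7\alpha}{16}$, while $\hat p_0(N')\ge\tfrac{15\alpha}{8}\cdot\tfrac12>\tfrac{7\alpha}{16}$. So $\hat\cH=\{N'\}$ and Stage 3 plays $N'$ in every round.

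Each round then costs $g(N^\star)-g(N')=\tfrac12+\tfrac{\alpha}{20}-2\alpha\ge\tfrac14$ for $\alpha\le\tfrac18$. With $\alpha=\alpha^\star$, where $T_1+T_2=o(T)$, this gives $\text{Reg}_T=\Omega(T)$, contradicting the claimed $o(T)$ rate. Your fallback of comparing against $\max(g(N^\star),\tfrac12)$ up to $O(\alpha T)$ fails for the same reason: the played set has reward $2\alpha$, not $\tfrac12-O(\alpha)$.

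\paragraph{The paper's proof has the same hole.}
In the branch $\hat\cH\neq\emptyset$ it uses $N^\star\in\hat\cH$ and the precision lower bound on $\hat\cH$ from \Cref{lmm:hatH}. That lemma assumes $\alpha_\cH\ge\alpha$, and $\hat\cH\neq\emptyset$ does not imply it; only $\hat\cH=\emptyset\Rightarrow\alpha_\cH<\alpha$ follows. A repair requires modifying the algorithm. One option is to keep $\emptyset$, whose reward $\tfrac12$ is known exactly, as a Stage-3 competitor. One would then also have to control the estimates on $\hat\cH$ when $p(N_1)$ may be small, since $\hat n^\star$ is then no longer guaranteed accurate.

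\paragraph{Separate issue: the $\min$ in the first display.}
Your last step, ``taking the better of the two bounds yields the $\min$,'' is not valid. Which branch occurs is dictated by the instance, so a case analysis gives the burn-in term plus $\max\big(\tfrac1\alpha\sqrt{(\cdots)T},\,\alpha T\big)$, equivalently their sum, not the $\min$. At $\alpha=\alpha^\star$ the two terms coincide, so this affects only the first display, not the tuned rate. The tuned rate still inherits the gap above.
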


  \begin{remark}
      When $\alpha$ is chosen to be no greater than $\alpha_\cH$, $\hat \cH$ will not be empty and the regret is 
      \[
      O\left(\frac{1}{\alpha^{2}}\big(d\log\frac{d}{\alpha^{2}} + \log\frac{1}{\delta}\big) +\frac{1}{\alpha} \sqrt{T\left(d\log\big(\tfrac{T}{d}\big)+\log\big(\tfrac{T}{\delta}\big)\right)} \right).
      \]
      In this case, a larger value of $\alpha$ leads to smaller regret. Although we do not know $\alpha_\cH$ a priori, we can perform a binary search 
      over $[\alpha^\star,1]$ to find the largest feasible $\alpha$ 
      such that $\hat{\cH}$ is non-empty (terminating the search when the search range is within $\alpha^\star$). When the optimality margin $\alpha_\cH \geq \alpha^\star$, we achieve regret
      \begin{align*}    
      \text{Reg}_T &= O\bigg(\frac{\log T\left(d\log(d/\alpha_\cH^2)+\log(\log(1/\alpha_\cH)/\delta)\right)}{\alpha_\cH^2}   + \frac{1}{\alpha_\cH}\sqrt{T\big(d\log\tfrac{T}{d}+\log\tfrac{T\log(1/\alpha_\cH)}{\delta}\big)}\bigg),
      \end{align*}
      which simplifies to $O\left(\sqrt{T(d\log\tfrac{T}{d} + \log\tfrac{T}{\delta})}\right)$ when 
   $\alpha_\cH \geq C$ for some constant $C>0$.
      \end{remark}

      To prove Theorem~\ref{thm:agnRegVC}, we use Lemmas~\ref{lmm:conc-degree}–\ref{lmm:conc-precisionVC} to bound the estimation errors for the target set size, recall, and precision. By combining these bounds, we establish the overall regret guarantee.

  \begin{restatable}{lemma}{lmmConcDegree}
  \label{lmm:conc-degree}
   Given an estimation function $\hat r(\cdot)$, any sequence $N_1,\ldots,N_m$ and corresponding precision feedback $(y_t)_{t=1}^m$ where $y_t =\1(v_t\in \Nt)$ with $v_t\sim \Unif(N_t)$, if
      \begin{itemize}
          \item $|\hat r (N) - r(N)|\leq \epsilon, \forall N\in \{N_t|t\in [m]\}$;
          \item $\forall N\in \{N_t|t\in [m]\}, p(N)\geq \alpha, r(N)\geq \alpha$ with $\alpha\geq 2\epsilon$,
      \end{itemize}
      then with probability $1-\delta$, the target set size estimator defined in \Cref{eq:esttarget} satisfies
      \[\frac{|\text{EST}(\hat r, (N_t,y_t)_{t=1}^m) - \nt|}{\nt}\leq \frac{2}{\alpha} \sqrt{\frac{\log(1/\delta)}{m}} + \frac{2\epsilon}{\alpha}\,.\]
  \end{restatable}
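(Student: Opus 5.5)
Write $n^\star$ for the target size $\nt$ and $n_t = |N_t\cap X|$. The plan is to split the estimation error into a martingale (sampling) part and a bias part coming from the recall estimator, i.e.
\[
\text{EST}-n^\star \;=\; \underbrace{\frac{1}{m}\sum_{t=1}^m \frac{n_t}{\hat r(N_t)}\bigl(y_t - p(N_t)\bigr)}_{\text{(I)}} \;+\; \underbrace{\frac{1}{m}\sum_{t=1}^m \Bigl(\frac{n_t\,p(N_t)}{\hat r(N_t)} - n^\star\Bigr)}_{\text{(II)}}.
\]
The identity I will use throughout is $n_t\,p(N_t) = |N_t\cap \Nt| = r(N_t)\,n^\star$, which gives $\frac{n_t p(N_t)}{\hat r(N_t)} = n^\star \frac{r(N_t)}{\hat r(N_t)}$. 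I will condition on the sequence $N_1,\dots,N_m$ (or treat it as predictable, since in the algorithm $N_t$ depends on the past), so that $\EE{y_t\mid \text{past}} = p(N_t)$ and (I) is an average of martingale differences.

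For the bias term (II), each summand equals $n^\star\frac{r(N_t)-\hat r(N_t)}{\hat r(N_t)}$. Its absolute value is at most $n^\star \epsilon/\hat r(N_t)$. Since $\hat r(N_t)\ge r(N_t)-\epsilon \ge \alpha-\epsilon\ge \alpha/2$ (using $\alpha\ge 2\epsilon$), this gives $|(\text{II})|\le 2\epsilon n^\star/\alpha$, which is the second term of the bound.

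For the fluctuation term (I), each increment lies in an interval of length $\frac{n_t}{\hat r(N_t)}$. I will bound this length using the precision assumption: $n_t = r(N_t)n^\star/p(N_t)\le n^\star/\alpha$. Naively this gives the range bound $\frac{n_t}{\hat r(N_t)}\le \frac{n^\star}{\alpha}\cdot\frac{2}{\alpha}$, which is off by a factor $1/\alpha$ from the target. The better route is to use the identity to rewrite
\[
\frac{n_t}{\hat r(N_t)} = \frac{n^\star}{p(N_t)}\cdot\frac{r(N_t)}{\hat r(N_t)}.
\]
Here $r/\hat r\le r/(r-\epsilon)\le 2$ because $r\ge\alpha\ge 2\epsilon$, and $1/p\le 1/\alpha$, so the range is at most $2n^\star/\alpha$. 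Applying Azuma--Hoeffding to the martingale with increments of range at most $c=2n^\star/\alpha$ gives, with probability $1-\delta$, $|(\text{I})|\le c\sqrt{\log(2/\delta)/(2m)} \le \frac{2n^\star}{\alpha}\sqrt{\log(1/\delta)/m}$ up to the constant and $\log 2$ slack; the $\sqrt{1/2}$ from Hoeffding should absorb the two-sided factor in the relevant regime, and otherwise I would accept a constant adjustment.

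The main obstacle is getting the range of the summands down to order $n^\star/\alpha$ rather than $n^\star/\alpha^2$; the rewriting via $n_t p(N_t) = r(N_t) n^\star$ is what resolves it. A secondary point is justifying the martingale structure when $N_t$ is chosen adaptively, so that Azuma rather than plain Hoeffding is needed. Dividing the combined bound by $n^\star$ then yields the stated relative error.
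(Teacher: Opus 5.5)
Your proposal is correct and follows essentially the same route as the paper. The paper also splits the relative error into the martingale term $\frac{1}{m}\sum_t \bigl(y_t - p(N_t)\bigr)\frac{1}{p(N_t)}\frac{r(N_t)}{\hat r(N_t)}$ and the bias term $\frac{1}{m}\sum_t\bigl(\frac{r(N_t)}{\hat r(N_t)}-1\bigr)$, using the identity $|N_t\cap X|\,p(N_t) = r(N_t)\nt$ to get range $2/\alpha$ and bounding the bias by $2\epsilon/\alpha$, before applying Azuma--Hoeffding. Your remark about the $\log 2$ slack is apt: the two-sided bound $\sqrt{\log(2/\delta)/(2m)}$ is at most $\sqrt{\log(1/\delta)/m}$ exactly when $\delta\le 1/2$, a point the paper's proof also passes over.
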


  In the next lemma, we show that if the hypothesis class is good, i.e., $\alpha_\cH \geq \alpha$, then with high probability, the hypotheses in the filtered $\hat\cH$ have non-trivial precision and recall and that the optimal hypothesis $N^\star$ is in $\hat{\cH}$.
  \begin{restatable}{lemma}{lmmhatH}\label{lmm:hatH}
      For any hypothesis class $\cH$ with $\alpha_\cH \geq \alpha$ and any $X\subseteq \cX$, with probability at least $1-\delta$, we have that for all $N\in \hat \cH$, $\min(p(N),r(N))\geq \frac{3\alpha}{16}$ and $N^\star\in \hat \cH$.
  \end{restatable}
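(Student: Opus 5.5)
The proof will intersect two good events. On the first, the Stage~1 recall estimates are uniformly accurate over $\cH$. On the second, the Stage~2 estimate of $\nt$ is accurate. We then check the two filtering steps one at a time.

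\textbf{Event 1 (uniform recall).} The recall samples $u_1,\ldots,u_{T_1}$ are i.i.d.\ from $\Unif(\Nt)$, independently of the set played in Stage~1. Each $\hat r_0(N)$ is therefore an empirical mean of $\1(u\in N)$. The class $\{h_N\}$ has VC dimension $d$, so the uniform convergence (VC) bound applies. With the stated choice of $T_1=\lceil\frac{64}{\alpha^2}(d\log\frac{64d}{\alpha^2}+\log\frac4\delta)\rceil$, it gives, with probability at least $1-\delta/2$, that $|\hat r_0(N)-r(N)|\le \alpha/16$ for all $N\in\cH$; the constants are tuned for this.

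On this event the recall filter behaves as follows. Since $r(N^\star)\ge\alpha_\cH\ge\alpha$, we get $\hat r_0(N^\star)\ge 15\alpha/16\ge 7\alpha/8$, so $N^\star\in H_1$. Every $N\in H_1$ has $r(N)\ge 7\alpha/8-\alpha/16=13\alpha/16\ge 3\alpha/16$.

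\textbf{Precision of $N_1$.} $N_1$ maximizes $\hat r_0(N)/|N\cap X|$ over $H_1$, and $N^\star\in H_1$, so
\[
\frac{\hat r_0(N_1)}{|N_1\cap X|}\ \ge\ \frac{\hat r_0(N^\star)}{|N^\star\cap X|}.
\]
Using $p(N)=r(N)\nt/|N\cap X|$ and the relative accuracy of $\hat r_0$, this becomes
\[
p(N_1)\ \ge\ \frac{r(N_1)}{\hat r_0(N_1)}\cdot\frac{\hat r_0(N^\star)}{r(N^\star)}\cdot p(N^\star).
\]
Both ratios are close to $1$. The first is at least $\frac{13/16}{14/16}$, using $r(N_1)\ge 13\alpha/16$ and $\hat r_0(N_1)\le r(N_1)+\alpha/16$ (bounding the ratio carefully since $\hat r_0(N_1)$ may exceed $r(N_1)$). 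The second is at least $15/16$. Together with $p(N^\star)\ge\alpha$, this gives $p(N_1)\ge c\alpha$ for a constant $c$ near $1/2$ or better.

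\textbf{Event 2 (size estimate).} We apply \Cref{lmm:conc-degree} to the sequence $N_1,\ldots,N_1$ of length $T_2$, with $\epsilon=\alpha/16$ and the lemma's margin parameter equal to $\min(p(N_1),r(N_1))$, which is of order $\alpha$. With $T_2=\lceil 64\log(1/\delta)/\alpha^2\rceil$, the lemma gives, with probability $1-\delta/2$, a constant relative error
\[
|\hat n_0^\star-\nt|\le \beta\,\nt \quad\text{for a constant } \beta<1.
\]
For this it matters that Stage~2 precision samples are drawn independently of the Stage~1 data given $N_1$.

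\textbf{Precision filter.} On both events, $\hat p_0(N)=\hat r_0(N)\hat n_0^\star/|N\cap X|$ equals
\[
\hat p_0(N)=p(N)\cdot\frac{\hat r_0(N)}{r(N)}\cdot\frac{\hat n_0^\star}{\nt}.
\]
For $N\in H_1$ the first ratio lies in roughly $[12/13,14/13]$ and the second in $[1-\beta,1+\beta]$.

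For $N^\star$ this gives $\hat p_0(N^\star)\ge \alpha\cdot\frac{15}{16}(1-\beta)\ge 7\alpha/16$, provided $\beta\lesssim 1/2$. Hence $N^\star\in\hat\cH$.

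Conversely, any $N\in\hat\cH$ satisfies $p(N)\ge \frac{7\alpha/16}{(14/13)(1+\beta)}$. This is at least $3\alpha/16$ as long as $(1+\beta)\cdot 14/13\le 7/3$, which holds comfortably. Combined with the recall bound for $H_1$, every $N\in\hat\cH$ has $\min(p(N),r(N))\ge 3\alpha/16$.

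\textbf{Main obstacle.} The delicate part is the constant bookkeeping in Event~2. \Cref{lmm:conc-degree} gives relative error $\frac{2}{\alpha'}\sqrt{\log(1/\delta)/T_2}+\frac{2\epsilon}{\alpha'}$, where $\alpha'$ is the lower bound on $p(N_1)$. With $\epsilon=\alpha/16$ and $\alpha'\approx\alpha/2$, the second term is about $1/4$. With $T_2=64\log(1/\delta)/\alpha^2$, the first term is about $1/2$. So $\beta$ could approach $3/4$, and the inequality $\frac{15}{16}(1-\beta)\ge\frac{7}{16}$ becomes tight.

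If that happens, I would first tighten the lower bound on $p(N_1)$. Using $r(N_1)\ge13\alpha/16$ more carefully should give $p(N_1)\ge\frac{13}{14}\cdot\frac{15}{16}\alpha\approx 0.87\alpha$, which shrinks $\beta$ to about $0.3+0.14<1/2$. That suffices.

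Finally, a union bound over the two events, each with failure probability $\delta/2$ (adjusting $\delta$ inside the logs), yields the stated probability $1-\delta$.
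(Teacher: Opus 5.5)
Your proposal follows the paper's proof step for step and is correct up to constant bookkeeping. Uniform Stage~1 recall accuracy puts $N^\star$ in $H_1$ and lower-bounds recall on $H_1$. The argmax defining $N_1$, together with two-sided bounds on $\hat r_0/r$, gives $p(N_1)=\Omega(\alpha)$. \Cref{lmm:conc-degree}, applied conditionally on Stage~1, gives a constant-factor estimate of $\nt$. The precision filter is then checked in both directions.

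The one caveat concerns your accuracy $\alpha/16$. It is not what $T_1\propto 64/\alpha^2$ delivers; that is $\alpha/8$, which the paper uses. However, $\alpha/16$ is exactly what makes your chain close: with the lemma's margin taken as $\min(p(N_1),r(N_1))\ge 13\alpha/16$, the relative error is at most $6/13<8/15$. With $\alpha/8$, the lemma as stated gives only $2/3$, not the $1/2$ the paper claims. So you should either enlarge $T_1$ by a constant factor, or keep $\alpha/8$ and sharpen the lemma's bias term to $\epsilon/\hat r_0(N_1)\le 1/7$. The latter gives $1/3+1/7<1/2$, which also repairs the paper's own step.
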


  In the following lemma, we show that in the third stage we obtain good estimations for the precision and recall for each $N\in\hat \cH$.
  
  \begin{restatable}{lemma}{lmmConcPrecisionVC}\label{lmm:conc-precisionVC}
  Given any hypothesis class $\mathcal H$ with VCdim($\mathcal H$) = $d$, a fixed $\delta\in(0,1)$ and an integer $T_{3}\ge2$, with probability at least $1-\delta$, for every $N\in\hat \cH$ and every $t\in\{2,\dots,T_{3}\}$ simultaneously,
  \[
  \big|\hat r_t(N)-r(N)\big|
  \le
  \sqrt{\frac{d\log\!\big(\tfrac{2e\,T_{3}}{d}\big)+\log\big(\tfrac{T_{3}}{\delta}\big)}{t-1}},
  \]
  and
  \begin{align*}
  \big|\hat p_t(N)-p(N)\big|
   \le &
  \big(1+\frac{64}{3\alpha}\big)
  \sqrt{\frac{d\log\big(\tfrac{2e\,T_{3}}{d}\big)+\log\big(\tfrac{T_{3}}{\delta}\big)}{t-1}}
   +
  \frac{64}{3\alpha}\cdot 
  \frac{d\log\big(\tfrac{2e\,T_{3}}{d}\big)+\log\big(\tfrac{T_{3}}{\delta}\big)}{t-1}.
  \end{align*}
  \end{restatable}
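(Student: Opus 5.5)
The plan has two parts: a uniform convergence argument for recall over the VC class, and then a transfer of that bound to precision through the estimator of $\nt$ from \Cref{eq:esttarget}, controlled by \Cref{lmm:conc-degree}. Throughout I work on the event of \Cref{lmm:hatH}, on which every $N\in\hat\cH$ satisfies $\min(p(N),r(N))\ge \frac{3\alpha}{16}$.

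\textbf{Recall.} Fix $t$. The samples $u_1,\dots,u_{t-1}$ are i.i.d. from $\Unif(\Nt)$ and do not depend on the learner's choices. Hence $\hat r_t(N)$ is an empirical mean of $\1(u_\tau\in N)$ with expectation $r(N)$. By Sauer's lemma, the restriction of $\cH$ to $t-1\le T_3$ points has at most $(2eT_3/d)^d$ distinct patterns. I would use a standard uniform convergence bound (symmetrization and Hoeffding, or a direct union bound over patterns). This gives, with probability $1-\delta/(2T_3)$, deviation at most $\epsilon_t:=\sqrt{(d\log(2eT_3/d)+\log(T_3/\delta))/(t-1)}$ for all $N\in\cH$, possibly after adjusting constants inside the logarithm. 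A union bound over $t\le T_3$ then gives the first claim with total failure probability $\delta/2$.

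\textbf{Precision.} Write $\hat p_t(N)-p(N)=\frac{\hat r_t(N)\hat n_t^\star-r(N)\nt}{|N\cap X|}$ and split it as
\[
\frac{(\hat r_t(N)-r(N))\nt}{|N\cap X|}+\frac{\hat r_t(N)(\hat n_t^\star-\nt)}{|N\cap X|}.
\]
The first term equals $(\hat r_t-r)\,p(N)/r(N)$. Since $p\le 1$ and $r\ge 3\alpha/16$, it is at most $\frac{16}{3\alpha}\epsilon_t$, which is within the stated $(1+\frac{64}{3\alpha})\epsilon_t$.

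For the second term, use $\hat r_t\le r+\epsilon_t$ and $\nt/|N\cap X|=p(N)/r(N)$. It is then bounded by
\[
\Big(\frac{p(N)}{r(N)}\Big)(r(N)+\epsilon_t)\,\frac{|\hat n^\star_t-\nt|}{\nt}
\le \Big(1+\frac{16\epsilon_t}{3\alpha}\Big)\frac{|\hat n^\star_t-\nt|}{\nt}.
\]
To bound the relative error of $\hat n_t^\star$, I apply \Cref{lmm:conc-degree} to the played sets $N_1,\dots,N_{t-1}\in\hat\cH$. Its hypotheses hold with precision/recall floor $3\alpha/16$ and recall error $\epsilon_t$, provided $\epsilon_t\le 3\alpha/32$. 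This yields relative error at most $\frac{32}{3\alpha}(\sqrt{\log(1/\delta')/(t-1)}+\epsilon_t)\le\frac{64}{3\alpha}\epsilon_t$. Multiplying by $(1+\frac{16\epsilon_t}{3\alpha})$ produces the linear term $\frac{64}{3\alpha}\epsilon_t$ and a quadratic cross term of order $\epsilon_t^2/\alpha^2$. I would bound that cross term by $\frac{64}{3\alpha}\epsilon_t^2$ using $\epsilon_t\lesssim\alpha$, possibly with loose constants. For small $t$, where $\epsilon_t>3\alpha/32$, the claimed bound exceeds $1$ and is trivial, since $|\hat p_t-p|$ can be checked to be $O(1)$ or the right-hand side is already huge.

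\textbf{Main obstacle.} The difficulty is that \Cref{lmm:conc-degree} assumes a fixed sequence of sets, whereas here $N_\tau$ is chosen adaptively from past data, including the same $u$'s used in $\hat r_t$. I would handle this in two ways. First, the recall accuracy holds uniformly over $\cH$ on a single event, so the first bullet of \Cref{lmm:conc-degree} holds regardless of how the sets were chosen. Second, the concentration of $\frac{1}{t-1}\sum_\tau \1(y_\tau=1)|N_\tau\cap X|/r(N_\tau)$ is really a martingale (Azuma) statement: conditionally on the past, each term has mean $\nt$ and range $\le \nt/(p r)\lesssim \nt/\alpha^2$. The stated $1/\alpha$ dependence needs the variance version, or the fact that $y_\tau\,|N_\tau|/r(N_\tau)\le \nt/p(N_\tau)\cdot\frac{1}{r}\cdot r$. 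Concretely, the term is $\le \nt/r(N_\tau)$, which is of order $\nt/\alpha$. I would rederive \Cref{lmm:conc-degree} in this martingale form, with $\hat r$ replaced by the true $r$ plus a uniform perturbation term. Finally, a union bound over $t$ with $\delta'=\delta/(2T_3)$ absorbs the $\log(T_3/\delta)$ factor.
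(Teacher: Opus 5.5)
Your route is the paper's: uniform convergence over $\cH$ via Sauer--Shelah plus a union bound over $t$ for recall, \Cref{lmm:conc-degree} for $\hat n^\star_t$, and a two-term splitting of $\hat p_t(N)-p(N)$. You put $\nt/|N\cap X|$ on the recall error and $\hat r_t(N)/|N\cap X|$ on the size error; the paper uses $\hat n^\star_t/|N\cap X|$ and $r(N)/|N\cap X|$. Your explicit handling of adaptivity, which the paper passes over, is sound. The increments $y_\tau|N_\tau\cap X|/r(N_\tau)$ are at most $\nt/p(N_\tau)\le\frac{16}{3\alpha}\nt$, so plain Azuma already gives the $1/\alpha$ dependence. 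Alternatively, conditioning on the exogenous recall samples $u_1,\dots,u_{T_3}$ makes $\hat r_t$ fixed and the weights in \Cref{lmm:conc-degree} predictable.

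\textbf{First gap: the constants.} Your bookkeeping does not give the displayed inequality. Your two terms total $\frac{16}{3\alpha}\epsilon_t+\bigl(1+\frac{16\epsilon_t}{3\alpha}\bigr)\frac{64}{3\alpha}\epsilon_t=\frac{80}{3\alpha}\epsilon_t+\frac{1024}{9\alpha^2}\epsilon_t^2$.
\begin{itemize}
\item Since $\alpha\le 1$, the linear coefficient exceeds $1+\frac{64}{3\alpha}$. You compared the first term with the whole budget, but the second term already uses the $\frac{64}{3\alpha}\epsilon_t$ part, leaving only $\epsilon_t$ for a term of size $\frac{16}{3\alpha}\epsilon_t$.
\item The cross term is never at most $\frac{64}{3\alpha}\epsilon_t^2$; that would need $\alpha\ge 16/3$. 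Invoking $\epsilon_t\lesssim\alpha$ only turns it into another $O(\epsilon_t/\alpha)$ linear term.
\end{itemize}
So what you prove is $|\hat p_t(N)-p(N)|\le\frac{112}{3\alpha}\epsilon_t$ for $\epsilon_t\le\frac{3\alpha}{32}$.

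The paper's proof does not reach the display either. Its bounds $\hat n^\star_t/|N\cap X|\le\frac{32}{3\alpha}$ and $r(N)/|N\cap X|\le 1/\nt$ combine to $\frac{32}{\alpha}\epsilon_t$, which is larger than the display. The displayed form is exactly what you get by bounding $\hat n^\star_t/|N\cap X|$ by $1+\frac{64}{3\alpha}\epsilon_t$. That amounts to assuming $\nt\le|N\cap X|$, i.e.\ $p(N)\le r(N)$, which fails in general. Since \Cref{thm:agnRegVC} only uses an $O(\epsilon_t/\alpha)$ bound, that is the statement you should claim and prove.

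The recall display has a similar issue. A union bound over the $\Pi_{\cH}(2(t-1))$ patterns is valid only after symmetrization. Symmetrization costs an absolute constant in front of the logarithm, not inside it, so ``adjusting constants inside the logarithm'' does not recover constant $1$. The paper's ``$\epsilon$-cover'' step is equally informal here.

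\textbf{Second gap: small $t$.} Your claim that the bound is trivial when $\epsilon_t>\frac{3\alpha}{32}$ is false. $\hat p_t$ is not clipped to $[0,1]$. On your event, the denominators $\hat r_t(N_\tau)$ in $\hat n^\star_t$ are only known to be at least $\frac{3\alpha}{16}-\epsilon_t$, which degenerates to $0$ once $\epsilon_t\ge\frac{3\alpha}{16}$. Meanwhile, for $\epsilon_t\in(\frac{3\alpha}{32},\frac{3\alpha}{16}]$ the right-hand side is only between about $2$ and $5$. Nothing in your argument bounds $|\hat p_t(N)-p(N)|$ in that range.

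The fix has two parts:
\begin{itemize}
\item Assert the precision bound only for $t$ with $\epsilon_t\le\frac{3\alpha}{32}$. This is exactly the hypothesis $\alpha\ge 2\epsilon$ of \Cref{lmm:conc-degree} with floor $\frac{3\alpha}{16}$, and the paper's proof never checks it either.
\item In \Cref{thm:agnRegVC}, charge the remaining $O\bigl((d\log\frac{2eT_3}{d}+\log\frac{T_3}{\delta})/\alpha^2\bigr)$ Stage-3 rounds regret at most $1$ each. Up to constants, this is at most the $\frac{1}{\alpha}\sqrt{(\cdot)\,T}$ term whenever that term is below $T$.
\end{itemize}
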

  By combining these lemmas, we prove \Cref{thm:agnRegVC}.

  \section{Discussion} 
  
  In this paper, we introduced a simple statistical learning framework for identifying an unknown target subset of a domain in the online setting. Under this framework, we showed that a hypothesis class is learnable if and only if it has finite VC dimension. However, a $\log^2 T$ factor gap remains between the upper and lower regret bounds in the realizable setting, which we leave as an open question. In the agnostic setting, we provided an upper bound on regret, but the optimal regret rate remains unknown. 
  
  Despite the simplicity of our setting, it has the potential to extend to more complex and practical scenarios. Below are some potential extensions.

\paragraph{Unknown available sets.} We assume that the finite available set $X\subseteq\mathcal X$ is known to the
learner. It remains open whether finite VC dimension continues to characterize
learnability when $X$ is fixed but not revealed, so that the learner observes
$X$ only through precision and recall feedback from the effective predictions
$N_t\cap X$.
  
  \paragraph{Different feedback types.} We considered a setting where both precision and recall feedback are available. Other types of feedback may also be worth exploring. For example, \cite{Cohen24PR} studies PAC learning with recall-only feedback, motivated by different application contexts.

  \paragraph{More general reward definitions.} Our definitions of precision and recall are based on the standard counting measure. These can potentially be generalized—for instance, by considering an underlying distribution over the domain and defining precision and recall accordingly. Furthermore, our reward function is defined as the average of precision and recall. Exploring alternative combinations or trade-offs between the two (e.g., weighted averages or F-measures) is a promising direction for future work.
  
  \paragraph{Multiple users.} 

  {This work focuses on learning a single target set. A natural and important extension is to a functional setting, in which the goal is to learn a set-valued function rather than a single set. This richer setting introduces new challenges, such as how to capture shared structure across target sets.}

  \section*{Acknowledgments}
Yishay Mansour has received funding from the European Research Council (ERC) under the European Union’s Horizon 2020 research and innovation program (grant agreement No. 882396), by the Israel Science Foundation,  the Yandex Initiative for Machine Learning at Tel Aviv University and a grant from the Tel Aviv University Center for AI and Data Science (TAD).

Shay Moran is a Robert J.\ Shillman Fellow; he acknowledges support by ISF grant 1225/20, by BSF grant 2018385, by Israel PBC-VATAT, by the Technion Center for Machine Learning and Intelligent Systems (MLIS), and by the European Union (ERC, GENERALIZATION, 101039692). Views and opinions expressed are, however, those of the author(s) only and do not necessarily reflect those of the European Union or the European Research Council Executive Agency. Neither the European Union nor the granting authority can be held responsible for them.

Lee Cohen is supported by the Simons Foundation Collaboration on the Theory of Algorithmic Fairness, and the Simons Foundation investigators award 17351.

\bibliographystyle{apalike}
\bibliography{ref}
  \newpage
  \appendix
  \onecolumn

  \section{Proofs for \Cref{sec:realizable}}
  \subsection{Proof of \Cref{thm:halving}}
  \halving*
  \begin{proof}[Proof of \Cref{thm:halving}]
  Since the setting is realizable, $h^* \in \cH$ never makes any mistake.  
  Whenever we make a mistake, there are two cases:
  \begin{itemize}
      \item \textbf{Precision mistake}: we observe an $v_t \sim \Unif(N_t)$ and $v_t\notin \Nt$. Then $\cA$ makes a mistake at $(v_t,0)$.
   
      \item \textbf{Recall mistake}: we observe an $u_t \sim \Unif(\Nt)$ and $u_t\notin N_t$. Then $\cA$ makes a mistake at $(u_t,1)$.
  \end{itemize}
  Hence, if $\cA$ has mistake bound $M$, we have regret $\text{Reg}_T\leq M$.
  \end{proof}
\subsection{Proof of \Cref{thm:mle}}
  \relmle*
  \begin{proof}[Proof of \Cref{thm:mle}]
      Let $\lr(N):=1-r(N)$ denote the recall loss.
      We reduce our setting to a standard PAC learning problem: for each $N \in \cH$, consider its corresponding binary classifier $h_N(x) = \1(x \in N)$ and we consider PAC learning of $\{h_N|N\in \cH\}$. In this PAC learning problem, let the data distribution be uniform over $N^{\text{target}}$. The classification error of $h_N$ under this distribution is 
      \begin{align*}
      \err(h_N) &:= \mathbb{E}_{x\sim \text{Unif}(N^{\text{target}})}[\1(h_N(x)\neq 1)] = \mathbb{E}_{x\sim \text{Unif}(N^{\text{target}})}[\1(x\notin N)]  = \lr(N)\,.
      \end{align*}
  
      Our algorithm can therefore be viewed as ERM in this binary classification problem. By the standard uniform convergence bound for classes with finite VC dimension, for any $t=1,\ldots, T-1$, with probability at least $1 - \delta/T$, any $h_N$ that is consistent with the $t$ observed samples (i.e., $N\in H_{t+1}$) satisfies 
      \[\lr(N) = \err(h_N)\leq O\left(\frac{d\log\!\big(\frac{t}{d}\big)+\log\big(\frac{T}{\delta}\big)}{t}\right)\,.\]

      For any $N \in H_{t+1}$, since $N^\star$ is consistent and must belong to $H_{t+1}$, $N^\star$ has perfect precision and recall, and $N_t$ is minimal, we have $|N_t\cap X|\leq |N^\star \cap X|$ and $\frac{|N^\star\cap X|}{\nt}=1$ 
      The precision of $N$ satisfies
      \begin{align*}
          p(N_t) &= r(N_t)\cdot \frac{\nt}{|N_t\cap X|} \\
          &\ge r(N_t)\cdot \frac{\nt}{|N^\star\cap X|} = r(N_t) \\
          &\ge 1- \frac{d\log\!\big(\frac{t}{d}\big)+\log\big(\frac{T}{\delta}\big)}{t} \,.
      \end{align*}
      By a union bound over all $t=1,\ldots,T$, we have that with probability at least $1-\delta$,
      \begin{align*}
      \text{Reg}_T &\leq \sum_{t=1}^T  \frac{d\log\!\big(\frac{t}{d}\big)+\log\big(\frac{T}{\delta}\big)}{t}  \leq \left( d\log\!\big(\frac{T}{d}\big)+\log\big(\frac{T}{\delta}\big)\right)\log(T).
      \end{align*}
  \end{proof}
  \subsection{Proof of \Cref{thm:lb-vc}}
  \lbvc*
  \begin{proof}[Proof of \Cref{thm:lb-vc}]
  Let $X=\{x_1,\dots,x_d\}$ be a shattered set for $\mathcal H$.  Draw $\Nt\subseteq X$ uniformly at random among all $\binom d{d/2}$ subsets of size $d/2$.  Fix any sequence of predictions $N_1,\dots,N_{d/10}$. Suppose $t\leq d/10$.
  
  If $|N_t|\ge 2d/3$, at least $N_t-t$ items of $N_t$ that have not been observed yet, and each is in $\Nt$ w.p. at most $1/2$.
  \begin{align*}
  \E\left[1-\text{precision}(N_t)\right]
  &=1-\frac{\E\left[|N_t\cap \Nt|\right]}{|N_t|}\\
  &\geq 1-\frac{1}{|N_t|}\left(t+\frac{|N_t|-t}{2}\right)\\
  &=1-\left(\frac{|N_t|/2}{|N_t|}+\frac{t}{2|N_t|}\right)\\
  &\geq \frac12-\frac{t}{2|N_t|}\\
  &\geq  \frac12- \frac{\frac{d}{10}}{2\cdot\frac{2d}{3}}\\
  &= \frac12- \frac{3}{40}>\frac{1}{3}.
  \end{align*}
  Where one before the last inequality follows since $|N_t|\ge 2d/3$ and $t\leq d/10$.

  Assume $|N_t|<2d/3$.  Since $|\Nt|=d/2$,
  \begin{align*}
  \E\left[1-\text{recall}(N_t)\right]
  &=1-\E\left[\frac{|N_t\cap \Nt|}{|\Nt|}\right]\\
  &=1-\frac{\,t+\tfrac{|N_t|-t}{2}\,}{\,d/2\,}\\
  &=\frac{d - |N_t| - t}{d}\\
  &\ge 1-\frac{\tfrac{2d}{3}+\tfrac{d}{10}}{d}\\
  &=\frac{7}{30}>\frac{1}{5}.
  \end{align*}

  In either case  
  \[
  \E\left[\max\{1-\text{precision}(N_t),\,1-\text{recall}(N_t)\}\right]\;\ge\;\frac15.
  \]
  Summing over $t=1,\dots,T$ gives
  \[
  \E[\text{Reg}_T]\;\ge\;\frac T5
  \ge\frac{\lfloor d/10\rfloor}5
  \ge\frac d{50}.
  \]
  Thus, there must exist some choice of $\Nt$ for which 
  $\text{Reg}_T\ge d/50$, establishing the claimed $\Omega(d)$ lower bound.
  \end{proof}
  \subsection{Proof of Proposition \ref{prop:lbvc}}
  \begin{proof}[Proof of Proposition \ref{prop:lbvc}]
  Let $\cX = \{x_1, \ldots, x_d\}$ and $\cH$ be all subsets of $\cX$. In each round $t \le d$, the algorithm predicts $N_t = \{x_t\}$. After $d$ rounds, we have learned for every $x_t \in X$ whether $x_t \in \Nt$, i.e., we learned $\Nt$. For future rounds, the algorithm just outputs $N_t = \Nt$.
  \end{proof}
  \section{Proofs for \Cref{sec:agnostic}}
  \subsection{Proof of \Cref{thm:agnRegVC}}
  \agnRegVC*

  \begin{proof}[Proof of \Cref{thm:agnRegVC}]
      According to Lemma~\ref{lmm:hatH}, if $\hat \cH = \emptyset$, it implies that $\alpha_\cH <\alpha$, then 
      \[\text{Reg}_T \leq O\left(
      \frac{1}{\alpha^{2}}\big(d\log\frac{d}{\alpha^{2}}
      +\log\frac{1}{\delta}\big)
      + \alpha T\right)\,.\]
      Otherwise, according to Lemma~\ref{lmm:conc-precisionVC}, we have the estimated reward $\hat g_t(N) = \frac{1}{2}(\hat r_t(N) +\hat p_t(N))$ of $N$ satisfy
      \[|\hat g_t(N) - g(N)| = O\left(\frac{1}{\alpha} \sqrt{\frac{d\log\big(\tfrac{T}{d}\big)+\log\big(\tfrac{T}{\delta}\big)}{t-1}} \right)\,,\]
      for all $t\in [T_3]$ and $N\in \hat \cH$.
      Then, since $N^\star\cap X$ is retained in $\hat H$ (see Lemma~\ref{lmm:hatH}), we have
      \begin{align*}
          g(N^\star)-g(N_t) &\leq \hat g_t(N^\star)-\hat g_t(N) + O\left(\frac{1}{\alpha} \sqrt{\frac{d\log\big(\tfrac{T}{d}\big)+\log\big(\tfrac{T}{\delta}\big)}{t-1}} \right)\\
          &\leq O\left(\frac{1}{\alpha} \sqrt{\frac{d\log\big(\tfrac{T}{d}\big)+\log\big(\tfrac{T}{\delta}\big)}{t-1}} \right).
      \end{align*}
      Hence,
      \begin{align*}
       \text{Reg}_T \leq &O\left(
      \frac{1}{\alpha^{2}}\big(d\log\frac{d}{\alpha^{2}} + \log\frac{1}{\delta}\big)\right) +
      O\left(\frac{1}{\alpha} \sqrt{T\left(d\log\big(\tfrac{T}{d}\big)+\log\big(\tfrac{T}{\delta}\right)\big)} \right),
      \end{align*} 
      which completes the proof.
  \end{proof}
  
  \subsection{Proof of Lemma~\ref{lmm:conc-degree}}
  \lmmConcDegree*
  
  \begin{proof}[Proof of Lemma~\ref{lmm:conc-degree}]
  
  For any set $N\subseteq X$, we have $\frac{p(N)}{r(N)}\cdot \frac{|N|}{\Nt}=1$ according to the definitions of precision and recall. Then we have
       \begin{align}
          &\frac{|\text{EST}(\hat r, (N_t,y_t)_{t=1}^m)-\Nt|}{\Nt} \nonumber\\
          = &\frac{1}{m}\left|\sum_{t =1}^{m}\left(\frac{\1(y_t =1)
          }{\hat r(N_t)} - \frac{p(N_t)}{r(N_t)}\right)\cdot \frac{|N_t|}{\Nt}\right|\nonumber\\
          \leq & \frac{1}{m}\left|\sum_{t =1}^{m}\left({\1(y_t =1)
          } - {p(N_t)
          }\right)\cdot \frac{|N_t|}{\Nt\cdot {r(N_t)}}\cdot \frac{r(N_t)}{{\hat r_t(N_t)}}\right| + \frac{1}{m}\left|\sum_{t =1}^{m}\left(\frac{p(N_t)
          }{\hat r_t(N_t)} - \frac{p(N_t)}{r(N_t)}\right)\cdot \frac{|N_t|}{\Nt}\right|\nonumber\\
          = & \frac{1}{m}\left|\sum_{t =1}^{m}\left({\1(y_t =1)
          } - {p(N_t)
          }\right)\cdot \frac{|N_t|}{\Nt\cdot {r(N_t)}}\cdot \frac{r(N_t)}{{\hat r_t(N_t)}}\right| + \frac{1}{m}\left|\sum_{t =1}^{m}\left(\frac{r(N_t)
          }{\hat r_t(N_t)} - 1\right)\right|\nonumber\\
          \leq & \frac{1}{m}\left|\sum_{t =1}^{m}\left({\1(y_t =1)
          } - {p(N_t)
          }\right)\cdot \frac{|N_t|}{\Nt\cdot {r(N_t)}}\cdot \frac{r(N_t)}{{\hat r_t(N_t)}}\right| + \frac{2\epsilon}{\alpha}\nonumber\\
          = & \frac{1}{m}\left|\sum_{t =1}^{m}\left({\1(y_t =1)
          } - {p(N_t)
          }\right)\cdot \frac{1}{p(N_t)}\cdot \frac{r(N_t)}{{\hat r_t(N_t)}}\right| + \frac{2\epsilon}{\alpha}\nonumber\\
          \leq & \frac{2}{\alpha} \sqrt{\frac{\log(1/\delta)}{m}} + \frac{2\epsilon}{\alpha} \label{eq:conc-degree}\,,
      \end{align}
      where the last inequality applies Azuma-Hoeffding inequality.
  \end{proof}

  \subsection{Proof of Lemma~\ref{lmm:hatH}}
  \lmmhatH*
  \begin{proof}[Proof of Lemma~\ref{lmm:hatH}]
      First, from uniform convergence, for each hypothesis $N\in \cH$ with probability at least $1-\delta$, 
      \begin{equation}\label{eq:rEst}
      |\hat r_0(N)-r(N)|\leq
      \frac{\alpha}{8}\,.
      \end{equation}

      Thus, with probability at least $1 - \delta$, $N^\star$ is in $H_1$ and for all $N\in H_1$ we have $r(N) \geq \frac{3\alpha}{4}$ and thus $\frac{|N \cap X|}{\Nt} = \frac{r(N)}{p(N)}\geq \frac{3\alpha}{4}$.
  
      Second, we want to show that $N_1$ has high precision so that we can obtain a good estimate of $\Nt$ by selecting $N_1$ for multiple rounds by Lemma~\ref{lmm:conc-degree}. 
      For any $N\in H_1$, we have 
      \[\hat r_0 (N) \leq^{(\ref{eq:rEst})} r(N) +\frac{\alpha}{8} \leq r(N) +\frac{1}{6}r(N) = \frac{7}{6}r(N)\,,\]
      and
      \[\hat r_0(N) \geq^{(\ref{eq:rEst})} r(N) - \frac{\alpha}{8} \geq r(N) - \frac{1}{7}\hat r_0(N)\,.\]
      From the above two equations, we can upper and lower bound the ratio between $\hat r_0(N)$ and $r(N)$, i.e.,
      \begin{equation}\label{eq:rhatBounds}
          \frac{7}{8} r(N) \leq \hat r_0(N)\leq \frac{7}{6}r(N)\,.
      \end{equation}

      Then we have 
      \begin{align*}
          \frac{\hat r_0 (N_1)}{|N_1\cap X|}\cdot \Nt \geq \frac{\hat r_0 (N^\star)}{|N^\star\cap X|}\cdot \Nt\geq \frac{7/8\cdot r (N^\star)}{|N^\star\cap X|}\cdot \Nt = \frac{7}{8} p(N^\star) \geq \frac{7}{8}\alpha\,,
      \end{align*}
      where the first inequality holds due to the definition of $N_1$.
      Also, we have
      \begin{align*}
          \frac{\hat r_0 (N_1)}{|N_1\cap X|}\cdot \Nt\leq \frac{7 r (N_1)}{6|N_1\cap X|}\cdot \Nt = \frac{7}{6} p(N_1)\,.
      \end{align*}
      Combining these two inequalities, we have $p(N_1)\geq \frac{3}{4}\alpha$.
      
      Then by applying Lemma~\ref{lmm:conc-degree} with $\epsilon = \frac{\alpha}{8}$ and $m=T_2$, we have
      \[\frac{|\hat n_0^\star-\nt|}{\nt} \leq \frac{1}{2}\,,\]
      which implies 
      \[\frac{\hat n_0^\star}{ \nt}\in [\frac{1}{2},2]\,.\]
      Then for any $N\in H_1$, if $\hat p_0(N) =\hat r_0(N) \cdot \frac{\hat n_0^\star}{|N\cap X|}\geq \frac{7\alpha}{16}$, then 
      \[p(N) = r(N) \cdot \frac{ \Nt}{|N\cap X|} \geq^{(\ref{eq:rhatBounds})} \frac{6}{7} \hat r_0(N) \cdot \frac{0.5 \hat n_0^\star}{|N\cap X|}\geq \frac{3\alpha}{16}\,.\]
      At the same time, 
      \[\hat p_0(N^\star) = \hat r_0(N^\star) \cdot \frac{\hat n_0^\star}{|N\cap X|} \geq  \frac{7}{8}r(N^\star) \cdot \frac{0.5  \Nt}{|N\cap X|} = \frac{7 p(N^\star)}{16} \geq \frac{7\alpha}{16}\,,\]
      which implies that $N^\star$ is in $\hat \cH$ w.p. at least $1-\delta$.
  \end{proof}

  \subsection{Proof of Lemma~\ref{lmm:conc-precisionVC}}
  \lmmConcPrecisionVC*
  \begin{proof}
  For any fixed $N\in\widehat H$ and $t$, Hoeffding’s inequality implies
  \[
  \Pr\{|\hat r_t(N)-r(N)|>\eps\}\le2\exp(-2(t-1)\eps^2).
  \]
  Let $\Pi_{\mathcal{H}}(\cdot)$ be the growth function of 
  $\mathcal H$. 
  A union bound over the $T_3-1$ rounds and an $\eps$-cover of size $\Pi_{\mathcal H}(2(t-1))$ implies that with probability at least $1-\delta$, for all $N\in\widehat H$ and $2\le t\le T_3$,
  \[
  |\hat r_t(N)-r(N)|\le\sqrt{\frac{\log(2\Pi_{\mathcal H}(2(t-1))\,T_3/\delta)}{2(t-1)}}.
  \]
  Since $\vcd(\mathcal H)=d$, Sauer-Shelah-Perels Lemma implies
  \[
  \Pi_{\mathcal H}(2(t-1))\le\sum_{i=0}^d\binom{2(t-1)}{i}<\big((2eT_3)/d\big)^d,
  \]
  so
  \[
  \log\big(2\Pi_{\mathcal H}(2(t-1))\,T_3/\delta\big)\le d\log\big((2eT_3)/d\big)+\log(T_3/\delta).
  \]
  Hence
  \begin{equation}\label{eq:recEst}
  |\hat r_t(N)-r(N)|\le\sqrt{\frac{d\log\big((2eT_3)/d\big)+\log(T_3/\delta)}{t-1}}.
  \end{equation}
  By Lemma 1 with $m=t-1$ and $p(N)\ge3\alpha/16$, and a union bound over $t$,
  with probability $\ge1-\tfrac\delta2$,
  \[
  \forall\,t:\quad
  \left|\frac{\hat n^\star_t}{ \nt}-1\right|
  +\tfrac{2\,\varepsilon}{\alpha}
  \le
  \tfrac{64}{3\alpha}
  \sqrt{\frac{d\ln\big(\frac{2eT_3}{d}\big)+\ln\big(\frac{T_3}{\delta}\big)}{\,t-1\,}},
  \]
  and hence
  \begin{equation}\label{eq:nEst}
  \big|\hat n^\star_t -  \nt\big|
  \le
  \tfrac{64}{3\alpha}
  \sqrt{\frac{d\ln\big(\frac{2eT_3}{d}\big)+\ln\big(\frac{T_3}{\delta}\big)}{\,t-1\,}}\; \nt
  \end{equation}
  For any $N\in\widehat H$,
  \[
  \hat p_t(N)
  =\frac{\hat r_t(N)\,\hat n^\star_t}{|N\cap X|},
  \quad
  p(N)
  =\frac{r(N)\, \nt}{|N\cap X|}.
  \]
  Thus
  \[
  \big|\hat p_t(N)-p(N)\big|
  \le
  \frac{\hat n^\star_t}{|N\cap X|}\,\big|\hat r_t(N)-r(N)\big|
  +\frac{r(N)}{|N\cap X|}\,\big|\hat n^\star_t -  \Nt\big|.
  \]
  Since 
  $\tfrac{\hat n^\star_t}{|N\cap X|}\le2/r(N)\le\tfrac{32}{3\alpha}$
  and
  $\tfrac{r(N)}{|N\cap X|}=p(N)/ \nt\le\tfrac{1}{\nt}$,
  combining \Cref{eq:recEst} and \Cref{eq:nEst} yields the bound.
  \end{proof}

  \subsection{Proof of \Cref{thm:banditlb}}
  \thmbanditlb*
  \begin{proof}[Proof of \Cref{thm:banditlb}]
      We will construct an example and reduce the problem to a bandit problem.
      Consider a set of $k$ non-overlapping hypotheses $\cH = \{N_1,\ldots,N_k\}$. 
      The size $|N|$ of each hypothesis $N$ is identical and is very huge, i.e., $|N| =n \gg T$ for all $N\in \cH$. Let available set $X=\cX=\cup_{N\in \cH} N$. 
  
      Now we construct two worlds with different random generation ways of the target set.
  
      In world I, the target set $\Nt$ is created randomly by selecting $(\frac{1}{2}+\epsilon) n$ items from the hypothesis $N_1$ and $(\frac{1}{2}-\frac{\epsilon}{k-1}) n$ from each of the remaining $k-1$ hypotheses, $N_2,\ldots,N_k$. Then no matter what $\Nt$ is generated, $\nt = \frac{kn}{2}$ and the recalls and precisions of hypotheses are
      \begin{align*}
          &r(N_1) = \frac{1+2\epsilon}{k}\,,p(N_1) = 1/2+\epsilon,\\
          &r(N_i) = \frac{1-\frac{2\epsilon}{k-1}}{k }\,,p(N_i) = 1/2-\frac{\epsilon}{k-1}, \forall i>1.
      \end{align*}
      For any fixed policy $\pi$, let $\cP_\textrm{I}$ denote the distribution over the sequence of decisions and feedbacks $((N_1,u_1,v_1,y_1),\ldots,(N_T,u_T,v_T,y_T))$ in world I.

      Under this fixed policy $\pi$, let $T_i$ denote the number of rounds we select hypothesis $N_i$. 
      Pick $i' = \argmin_{i>1} \EEs{\cP_\textrm{I}}{T_i}$. 
      Hence $\EEs{\cP_\textrm{I}}{T_i}\leq \frac{T}{k-1}$. 
      Then we construct world II as follows.
      In world II, the target set $\Nt$ is created randomly by selecting $(\frac{1}{2}+\epsilon) n$ items from the hypothesis $N_1$, selecting $(\frac{1}{2}+2\epsilon) n$ items from $N_{i'}$, and $(\frac{1}{2}-\frac{3\epsilon}{k-2}) n$ items from each of the remaining hypotheses. 
      Then no matter what $\Nt$ is generated, $\nt = \frac{kn}{2}$ and the recalls and precisions of hypotheses are
      \begin{align*}
          &r(N_1) = \frac{1+2\epsilon}{k}\,,p(N_1) = 1/2+\epsilon, \\
          &r(N_{i'}) = \frac{1+4\epsilon}{k}\,,p(N_{i'}) = 1/2+2\epsilon,\\
          &r(N_i) = \frac{1-\frac{6\epsilon}{k-2}}{k}\,,p(N_i) = 1/2-\frac{3\epsilon}{k-2}, \forall i\neq 1,i'.
      \end{align*}
      
      Let $\cP_\textrm{II}$ denote the distribution over the sequence of decisions and feedbacks over $T$ steps in world II, $((N_1,u_1,v_1,y_1),\ldots,(N_T,u_T,v_T,y_T))$.
      Following the standard proof techniques in minimax lower bounds for bandit problems, we want to show that the policy will incur high regret in at least one of the world. To disentangle with the choice of $\Nt$, instead of considering $\cP_\textrm{I}$ and $\cP_\textrm{II}$ themselves directly, we consider their alternatives. 
  
      At each round $t$, let $i_t$ denote index of the hypothesis $N_i$ in which the recall feedback $u_t\sim \Unif(\Nt)$ falls into, i.e., $u_t\in N_{i_t}$. Notice that the probability of $i_t =i$ is exactly $r(N_i)$. The generation process of $\cP_\textrm{I}$ is equivalent to:
      \begin{itemize}
          \item Sample $\Nt$ by randomly picking $r(N_i)\cdot n$ items from $N_i$ for all $i=1,\ldots,k$.
          \item Sample $i_t$ from  a categorical distribution $\cC = (r(N_1),r(N_2),\ldots,r(N_k))$.
          \item Sample $u_t\sim \Unif(N_{i_t}\cap \Nt)$.
          \item For any $N_t=N$ chosen by policy $\pi$ at round $t$, sample $v_t\sim \Unif(N)$ and $y_t = \1(v_t\in \Nt)$.
      \end{itemize}

      Now we construct an alternative $\cP_\textrm{I}'$ for $\cP_\textrm{I}$:
      \begin{itemize}
          \item Sample $i_t$ from the same categorical distribution $\cC = (r(N_1),r(N_2),\ldots,r(N_k))$.
          \item Sample $u_t\sim \Unif(N_{i_t})$.
          \item For any $N_t=N$ chosen by policy $\pi$ at round $t$, again $v_t\sim \Unif(N)$ is still sampled uniformly at random from $N$. But we choose $y_t\sim \Ber(p(N))$.
      \end{itemize}
      We apply the same procedure of generating an alternative $\cP_\textrm{II}'$ for $\cP_\textrm{II}$.
      We prove that $\cP$ and $\cP'$ are close enough, i.e., $\textrm{TV}(\cP_\textrm{I}',\cP_\textrm{I})\rightarrow 0$ and $\textrm{TV}(\cP_\textrm{II}',\cP_\textrm{II})\rightarrow 0$ when $n\rightarrow \infty$ by \Cref{lmm:alternative}.

      Now we are ready to prove the lower bound based on the standard proof techniques for bandit minimax lower bounds, see Chapter 15.2 of \cite{Lattimore-Szepesvari-Book} for example.
          In world I, no matter what $\Nt$ is generated, the best hypothesis is $N_1$ and the regret is at least $\frac{\epsilon T}{4}$ if $T_1\leq \frac{T}{2}$. However, we will suffer regret of $\frac{\epsilon T}{4}$ if  $T_1>\frac{T}{2}$ in world II. Let $\text{Reg}_T(\cP_\textrm{I})$ and $\text{Reg}_T(\cP_\textrm{II})$ denote expected regret in worlds I and II respectively. Then we have
          \begin{align*}
              &\text{Reg}_T(\cP_\textrm{I}) +\text{Reg}_T(\cP_\textrm{II})  \\
              >&\PPs{\cP_\textrm{I}}{T_1\leq \frac{T}{2}}\frac{\epsilon T}{4}+  \PPs{\cP_\textrm{II}}{T_1> \frac{T}{2}}\frac{\epsilon T}{4}\\
              \geq &\PPs{\cP_\textrm{I}'}{T_1\leq \frac{T}{2}}\frac{\epsilon T}{4}+  \PPs{\cP_\textrm{II}'}{T_1> \frac{T}{2}}\frac{\epsilon T}{4}-\textrm{TV}(\cP_\textrm{I}',\cP_\textrm{I}) -\textrm{TV}(\cP_\textrm{II}',\cP_\textrm{II})\\
              \geq& \frac{\epsilon T}{8}\exp(-\KL(\cP_\textrm{I}'\|\cP_\textrm{II}'))\,,
          \end{align*}
  where the last inequality applies the Bretagnolle-Huber inequality (Theorem 14.2 of \cite{Lattimore-Szepesvari-Book}) and \Cref{lmm:alternative}. It remains to upper bound $\KL(\cP_\textrm{I}'\|\cP_\textrm{II}')$. By chain rule of KL divergence and Lemma 15.1 of \cite{Lattimore-Szepesvari-Book}, we have
      \begin{align*}
          \KL(\cP_\textrm{I}'\|\cP_\textrm{II}') = \KL(\cP_\textrm{I}'(i_{1:T}) \|\cP_\textrm{II}'(i_{1:T})) + \sum_i \EEs{\cP_\textrm{I}'}{T_i}\KL(\cP_\textrm{I}'(v,y|N_i) \| \cP_\textrm{II}'(v,y|N_i))\,,
      \end{align*}
      where $\cP(i_{1:T})$ denotes the distribution of sequence $i_{1:T}$ and $\cP(v,y|N_i)$ denotes the distribution of precision feedback $(v,y)$ when selecting $N_i$ under generation process $\cP$. 
  
      By Lemma~\ref{lmm:kl}, we have
      \[\KL(\cP_\textrm{I}'(i_{1:T}) \|\cP_\textrm{II}'(i_{1:T})) \leq T\cdot \frac{8\epsilon^2}{k}\,.\]

      As aforementioned, the distribution over $v$ is identical in both worlds, but $y$ is sampled from $\Ber(p(N_i))$. For any two Bernoulli distributions parametrized by $q,q'$, we have
      \[\KL(\Ber(q)\|\Ber(q'))\leq \frac{(q-q')^2}{q'(1-q')}\,.\]
      Hence, we have 
      \[\sum_i \EEs{I}{T_i}\KL(\cP_\textrm{I}'(v,y|N_i) \| \cP_\textrm{II}'(v,y|N_i)) \lesssim T \cdot \frac{16\epsilon^2}{k^2} + \frac{T}{k-1} \cdot 16\epsilon^2\,.\] 
      Hence, we have
      \[\KL(\cP_\textrm{I}'\|\cP_\textrm{II}') \leq T \cdot \frac{40\epsilon^2}{k}\]
  Then by setting $\epsilon = \min\{\sqrt{\frac{k}{80 T}},1\}$, we have $\KL(\cP_\textrm{I}'\|\cP_\textrm{II}') \leq \frac{1}{2}$ when $k\leq T$. Thus,
  \[\text{Reg}_T(\cP_\textrm{I}) +\text{Reg}_T(\cP_\textrm{II}) \geq \frac{\epsilon T}{8} \exp(-\frac{1}{2}) \geq C\sqrt{kT},\]
  for a universal constant $C>0$. By setting $k= T$,
  we are done with the proof.
  \end{proof}

  \begin{restatable}{lemma}{alternative}\label{lmm:alternative}
      $\textrm{TV}(\cP_\textrm{I}',\cP_\textrm{I})\rightarrow 0$ and $\textrm{TV}(\cP_\textrm{II}',\cP_\textrm{II})\rightarrow 0$ when $n\rightarrow \infty$.
  \end{restatable}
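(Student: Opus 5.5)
The plan is to couple $\cP_\textrm{I}$ and $\cP_\textrm{I}'$ on a common probability space and show that they agree on the whole trajectory except on an event whose probability vanishes as $n\to\infty$ with $T$ and $k$ fixed. The same argument then applies verbatim to world II. The natural tool is the data-processing form of the bound $\textrm{TV}\le \Pr[\text{coupled trajectories differ}]$.

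\textbf{Step 1: identify the discrepancy.} In $\cP_\textrm{I}$, the target $\Nt$ is drawn once, as a uniformly random subset of size $r(N_i)n$ inside each block $N_i$. In $\cP_\textrm{I}'$ there is no fixed target: each recall item is drawn as $u_t\sim\Unif(N_{i_t})$, and each precision label is drawn fresh as $y_t\sim\Ber(p(N_t))$. Conditionally on the block indices, the $\cP_\textrm{I}$ trajectory looks exactly like sampling without replacement from a random subset, observed through at most $2T$ items. The only place the two processes can be told apart is a \emph{collision}: an item of the domain that appears twice among $u_1,v_1,\dots,u_T,v_T$. A collision creates either a repeated $u$, or a $v$ coinciding with an earlier $u$, which forces $y=1$, and so on.

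\textbf{Step 2: sequential coupling with collision-free trajectories.} Use the same $i_t$ sequence in both processes, and the same policy randomness. In $\cP_\textrm{I}$, reveal $\Nt$ lazily. A fresh item $u_t$ drawn uniformly from $N_{i_t}\cap\Nt$ is uniform over the not-yet-revealed positions of the block. Likewise, a fresh $v_t\sim\Unif(N_t)$ that has not been seen before lands in $\Nt$ with conditional probability
\[
\frac{r(N_i)n - a}{n - b},
\]
where $a,b\le 2T$ count the previously revealed in/out items of that block.

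In $\cP_\textrm{I}'$, the corresponding quantities are uniform over $N_{i_t}$ and $p(N_t)=r(N_t)n/n$.

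Couple step by step with a maximal coupling. Conditioned on no earlier discrepancy, the per-step TV distance is $O(T/n)$. This covers two sources of error: the probability of drawing an already-revealed item, at most $2T/n$ per draw, and the difference between the two Bernoulli parameters, also $O(T/n)$. Since the policy sees identical histories up to the first discrepancy, it makes identical choices. Summing over $2T$ draws gives
\[
\textrm{TV}(\cP_\textrm{I},\cP_\textrm{I}')=O(T^2/n)\to 0.
\]

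\textbf{Step 3: the main obstacle.} The step I expect to be hardest is making the lazy-revelation equivalence rigorous: showing that, under $\cP_\textrm{I}$, the conditional law of the next observation given the history depends only on the counts of revealed items per block. This follows from exchangeability of the uniformly random subset within each block. Once that is in place, the bound is a routine hybrid argument.

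Two further checks are needed. First, the item labels themselves carry no information beyond collisions, because items inside a block are exchangeable under both processes. Second, $r(N_i)n$ is an integer for the chosen $\epsilon$, or can be rounded with an $O(1/n)$ error. Both are minor. World II is handled identically, with its own values of $r(N_i)$ and $p(N_i)$.
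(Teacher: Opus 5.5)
Your proposal is correct and is essentially the paper's argument: with $T$ fixed, repeated items among the $u_t$'s and $v_t$'s have vanishing probability as $n\to\infty$, and when there are no repeats the $u$-draws have identical laws while the label of a fresh $v_t$ is Bernoulli with parameter within $O(T/n)$ of $p(N_t)$. Your sequential maximal coupling is a more careful version of it, since it also handles $u$/$v$ cross-collisions and the policy's adaptivity, which the paper's sketch treats only marginally. One harmless slip, which the paper also makes: block $N_i$ contains $p(N_i)\,n = r(N_i)\,\nt$ target items, not $r(N_i)\,n$, so the fresh-item probability is $(p(N_i)n-a)/(n-b)$; this still differs from $p(N_i)$ by $O(T/n)$, so nothing in your argument changes.
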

  \begin{proof}[Proof of \Cref{lmm:alternative}]
      When $n\rightarrow\infty$, it is almost sure that there is no reappearance in $(u_1,u_2,\ldots,u_T)$. Conditional on no reappearance, every sequence will appear with equal probability under both $\cP_\textrm{I}$ and $\cP'_\textrm{I}$.
      Hence $\textrm{TV}(\cP'_\textrm{I}(u_{1:T}),\cP_\textrm{I}(u_{1:T}))\rightarrow 0$.
  
      For any $t = 1,\ldots, T$, let $F_{t} = ((N_1,u_1,v_1,y_1),\ldots,(N_{t-1},u_{t-1},v_{t-1},y_{t-1}), N_t)$ denote the history up to round $t$. In round $t$, no matter what $N_t =N$ is chosen, the distribution of $v_t\sim \Unif(N)$ is identical in $\cP'_\textrm{I}$ and $\cP_\textrm{I}$. Since $|N|=n\rightarrow \infty$ is large for any $N\in \cH$, it is almost sure that there is no reappearance in $(v_1,v_2,\ldots,v_T)$. Conditional on no reappearance, 
      \[\cP_\textrm{I}(y_t=1|F_{t})= \EE{\PPc{v_t\in \Nt}{\Nt,F_t}}=\EE{\frac{|N\cap \Nt|- t+1}{n}} \rightarrow p(N)\] 
      as $n\rightarrow \infty$.
  \end{proof}

  \begin{lemma}\label{lmm:kl}
      We have $\frac{1}{T}\KL(\cP_\textrm{I}'(i_{1:T}) \|\cP_\textrm{II}'(i_{1:T}))\leq \frac{20\epsilon^2}{k}$ for all $\epsilon\leq \frac{1}{8}$ and $k\geq 11$.
  \end{lemma}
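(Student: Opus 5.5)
Since the index sequences $i_{1:T}$ do not depend on the policy's choices, I will use that under both $\cP_\textrm{I}'$ and $\cP_\textrm{II}'$ they are i.i.d.\ draws from the categorical distributions $\cC_\textrm{I}=(r_\textrm{I}(N_1),\ldots,r_\textrm{I}(N_k))$ and $\cC_\textrm{II}=(r_\textrm{II}(N_1),\ldots,r_\textrm{II}(N_k))$, with the recall values listed in the proof of \Cref{thm:banditlb}. By the chain rule (tensorization) of KL divergence,
\[
\KL(\cP_\textrm{I}'(i_{1:T})\|\cP_\textrm{II}'(i_{1:T})) = T\cdot \KL(\cC_\textrm{I}\|\cC_\textrm{II}).
\]
It therefore suffices to show $\KL(\cC_\textrm{I}\|\cC_\textrm{II})\le 20\epsilon^2/k$. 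I will bound the KL by the $\chi^2$-divergence, $\KL(P\|Q)\le \sum_i (p_i-q_i)^2/q_i$ (from $\log x\le x-1$), and then bound three groups of coordinates separately.

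\emph{Index $1$.} Both worlds assign $(1+2\epsilon)/k$, so this term contributes $0$.

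\emph{Index $i'$.} Here $p_{i'}-q_{i'} = -\frac{1}{k}\big(4\epsilon+\frac{2\epsilon}{k-1}\big)$. For $k\ge 11$ its absolute value is at most $4.2\epsilon/k$. Since $q_{i'}=(1+4\epsilon)/k\ge 1/k$, the term is at most $(4.2)^2\epsilon^2/k = 17.64\,\epsilon^2/k$.

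\emph{Remaining $k-2$ indices.} Here $p_i-q_i=\frac{\epsilon}{k}\big(\frac{6}{k-2}-\frac{2}{k-1}\big)$ and $q_i=\frac1k\big(1-\frac{6\epsilon}{k-2}\big)$. The condition $\epsilon\le 1/8$ and $k\ge 11$ give $q_i\ge \frac{11}{12}\cdot\frac1k$. The total contribution is then at most
\[
\frac{12}{11}\cdot\frac{\epsilon^2}{k}\,(k-2)\Big(\frac{6}{k-2}-\frac{2}{k-1}\Big)^2 .
\]
The quantity $(k-2)\big(\frac{6}{k-2}-\frac{2}{k-1}\big)^2$ is decreasing in $k$ (it behaves like $16/(k-2)$). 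At $k=11$ it equals $9\cdot(0.4667)^2\approx 1.96$, so this group contributes at most about $2.14\,\epsilon^2/k$.

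Adding the three groups gives roughly $19.8\,\epsilon^2/k\le 20\epsilon^2/k$, which is the claim.

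The main obstacle is that the constants are tight. The $i'$ coordinate already uses about $17.6$ of the budget of $20$, so I cannot use a crude bound such as $\KL\le\chi^2$ with $q_i\ge 1/(2k)$. The cases $k\ge 11$ and $\epsilon\le 1/8$ are needed exactly to keep the slack terms $2\epsilon/(k-1)$ and $6\epsilon/(k-2)$ small. I will also check the monotonicity in $k$ explicitly, for instance by writing the expression as $\frac{(4k-2)^2}{(k-2)(k-1)^2}$ and verifying that it decreases for $k\ge 3$.
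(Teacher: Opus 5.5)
Your proof is correct, and it takes a slightly different route from the paper's. The first step is the same: under both worlds $i_{1:T}$ is i.i.d.\ and does not depend on the policy, so the divergence is $T$ times the single-letter one. The difference is the inequality used to bound that single-letter divergence.

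The paper writes the single-letter divergence exactly as $\frac{1+a}{k}\big((k-1)\log(1+a)-\log(1+4\epsilon)-(k-2)\log(1+c)\big)$, with $a=-\frac{2\epsilon}{k-1}$ and $c=-\frac{6\epsilon}{k-2}$. It applies $\ln(1+x)\le x-\frac{x^2}{2}+\frac{x^3}{3}$ to the first logarithm and $\ln(1+x)\ge x-x^2$ to the other two. The linear terms cancel exactly, since $(k-1)a-4\epsilon-(k-2)c=0$. What remains is $\frac{1+a}{k}\big(16+\frac{36}{k-2}\big)\epsilon^2\le\frac{20\epsilon^2}{k}$ for $k\ge 11$.

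Your $\KL\le\chi^2$ step needs only $\log x\le x-1$, and it makes the first-order cancellation automatic, because that cancellation is just $\sum_i p_i=\sum_i q_i$. The price is a slightly worse constant on the $i'$ coordinate: $4.2^2$ rather than the paper's $16$. You win this back on the other $k-2$ coordinates, because your $\chi^2$ terms involve the difference $\frac{6}{k-2}-\frac{2}{k-1}$ rather than $\frac{6}{k-2}$ alone. Both routes land at or just below $20\epsilon^2/k$ at $k=11$. Both use $\epsilon\le 1/8$ only to keep the perturbations small: in the paper so that the Taylor bounds apply, in your proof so that $q_i\ge\frac{11}{12k}$.

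Your numerics check out, and so does the monotonicity claim: the log-derivative of $\frac{(4k-2)^2}{(k-2)(k-1)^2}$ is $\frac{4}{2k-1}-\frac{1}{k-2}-\frac{2}{k-1}<0$ for $k>2$. You also use the correct world-I probabilities $\frac{1}{k}\big(1-\frac{2\epsilon}{k-1}\big)$ for every $i\ne 1$, including $i'$. The paper's proof text misstates these, although its computation uses the right value of $a$.
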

  \begin{proof}[Proof of \Cref{lmm:kl}]
  In world $\cP_\textrm{I}'$, $i_{1:T}$ are i.i.d samples from $\cP_\textrm{I}'(i)$ with $\PP{i_t = 1} = \frac{1 + 2\epsilon}{k}$ and $\PP{i_t = i} = \frac{1 - \frac{2\epsilon}{k-2}}{k},\forall i\notin \{1,i'\}$. In world II, $i_{1:T}$ are i.i.d samples from $\cP_\textrm{II}'(i)$, with $\PP{i_t = 1} = \frac{1 + 2\epsilon}{k}$, $\PP{i_t = i'} = \frac{1 + 4\epsilon}{k}$, and $\PP{i_t = i} = \frac{1 - \frac{6\epsilon}{k-2}}{k},\forall i\notin \{1,i'\}$.

  We compute the KL divergence from $\cP_\textrm{I}'(i)$ to $\cP_\textrm{II}'(i)$:
  \begin{align*}
      D_{\text{KL}}(\cP_\textrm{I}'(i) \| \cP_\textrm{II}'(i)) 
  =& \frac{1 + a}{k} \log \left( \frac{1 + a}{1 + 4\epsilon} \right)
  + (k - 2) \cdot \frac{1 + a}{k} \log \left( \frac{1 + a}{1 + c} \right)\\
  =& \frac{1 + a}{k} \left((k-1)\log(1+a) - \log(1+4\epsilon) -(k-2)\log(1+c)\right)\,,
  \end{align*}
  where $a := -\dfrac{2\epsilon}{k - 1}$ and $c := -\dfrac{6\epsilon}{k - 2}$.
  
  For any $x\in [-\frac{1}{2},1]$, we have
  \[\ln(1+x) \leq x-\frac{x^2}{2}+\frac{x^3}{3}\,,\]
  and
  \[\ln(1+x) \geq x-x^2\,.\]
  Then we have
  
  \begin{align*}
      D_{\text{KL}}(\cP_\textrm{I}'(i) \| \cP_\textrm{II}'(i)) 
  =&\frac{1 + a}{k} \left((k-1)(a - \frac{1}{2}a^2 + \frac{a^3}{3}) - (4\epsilon - 16\epsilon^2) -(k-2) (c - c^2)\right)\\
  \leq &\frac{1 + a}{k} \left( 16\epsilon^2 +(k-2) c^2\right)\\
  \leq &\frac{20\epsilon^2}{k} \,,
  \end{align*}
  when $k\geq 11$.

  \end{proof}
  
  \section{$X\subseteq \cX$ vs $X= \cX$}\label{sec:Xdifference}
  All results in this paper still hold when fixing $X= \cX$ except \Cref{thm:lb-vc}.
  
  To clarify why the distinction matters in that case, consider the following example in the realizable setting:
  \begin{itemize}
  \item Let the domain $\cX$ be the disjoint union of $D \cup E_1 \cup \dots \cup E_{2^d}$, where $|D| = d$ and each $|E_i|$ is much larger than $d$.
  \item There is a one-to-one mapping between each subset $C \subseteq D$ (there are $2^d$ of them) and a corresponding $E_i$.
  \item Define the hypothesis class:
  \[
  \cH = \{ C \cup E_i : C \subseteq D \text{ and } E_i \text{ corresponds to } C \}.
  \]
  This class has a VC dimension $d$.
  \end{itemize}
  Now consider the realizable setting with $X = \cX$ and some $C \cup E_i$ as the ground-truth target. Since each $E_i$ is much larger than $d$, a recall example is likely to hit $E_i$ in the first round, revealing the correct hypothesis with high probability. Thus, \textit{the total loss is negligible despite the high VC dimension}, meaning VC dimension fails to lower bound regret in this case.
  On the other hand, if we restrict $X = D$, then the recall examples are forced to come from the uninformative core $D$, and the total loss becomes \textit{lower bounded by the VC dimension}, as expected.

\end{document}

%% file: macros.tex
\DeclareMathOperator*{\argmax}{arg\,max}
\DeclareMathOperator*{\argmin}{arg\,min}

\newcommand{\Ber}{\mathrm{Ber}}
\newcommand{\Unif}{\mathrm{Unif}}

\newcommand{\1}{\mathds{1}}

\newcommand{\E}{\mathbb{E}}
\newcommand{\EE}[1]{\mathbb{E}\left[#1\right]}

\newcommand{\EEs}[2]{\mathbb{E}_{#1}\left[#2\right]}

\newcommand{\PP}[1]{\mathbb{P}\left(#1\right)}
\newcommand{\PPs}[2]{\mathbb{P}_{#1}\left(#2\right)}

\newcommand{\PPc}[2]{\mathbb{P}\left[#1\left|#2\right.\right]}

\newcommand{\cA}{\mathcal{A}}

\newcommand{\cC}{\mathcal{C}}

\newcommand{\cH}{\mathcal{H}}

\newcommand{\cP}{\mathcal{P}}

\newcommand{\cX}{\mathcal{X}}

\newcommand{\eps}{\varepsilon}
\renewcommand{\epsilon}{\varepsilon}
\renewcommand{\hat}{\widehat}
\renewcommand{\tilde}{\widetilde}

\newcommand{\nothere}[1]{}

\newcommand{\err}{\mathrm{err}}
\newcommand{\vcd}{\mathrm{VCdim}}
\newcommand{\ld}{\mathrm{Ldim}}

\newcommand{\lr}{\ell^{\textrm{recall}}}

\newcommand{\Nt}{N^\text{target}}
\newcommand{\nt}{|N^\text{target}|}
\newcommand{\KL}{D_{\mathrm{KL}}}